\documentclass{article}
\usepackage{microtype}
\usepackage{graphicx}
\usepackage{subcaption}
\usepackage{booktabs} %

\usepackage{hyperref}

\usepackage[preprint]{icml2026}

\usepackage{amsmath}
\usepackage{amssymb}
\usepackage{mathtools}
\usepackage{amsthm}
\usepackage[utf8]{inputenc} %
\usepackage[T1]{fontenc}    %
\usepackage{hyperref}
\usepackage{url}
\usepackage{nicefrac}       %
\usepackage{xcolor}         %
\usepackage[svgnames]{xcolor}         %
\usepackage{multirow}
\usepackage{makecell}
\usepackage{adjustbox}
\usepackage{siunitx} 
\usepackage{colortbl}
\usepackage{bm}
\usepackage{xspace}
\usepackage[nameinlink]{cleveref}
\usepackage{enumitem}

\usepackage{amsfonts}       %
\usepackage{aliascnt}
\usepackage[flushleft]{threeparttable}
\usepackage[nameinlink]{cleveref}
\usepackage{soul}
\usepackage[most]{tcolorbox}

\renewcommand{\algorithmiccomment}[1]{\hfill $\triangleright$ #1}
\definecolor{OursColor}{HTML}{003442}
\definecolor{PAColor}{HTML}{F9CF80}
\definecolor{ReActColor}{HTML}{2E8CA6}

\usepackage{tikz}
\usetikzlibrary{positioning, arrows.meta, shapes, calc, fit, backgrounds, decorations.pathreplacing}

\definecolor{Red}{rgb}{0.768, 0.054, 0.054}
\definecolor{Blue}{rgb}{0.152, 0.294, 0.925}
\definecolor{Green}{rgb}{0,0.4,0.7}
\hypersetup{
    colorlinks=true,
    citecolor=teal,
    linkcolor=Red,
    urlcolor=Green,
}

\newtcolorbox[auto counter, number within=section]{promptbox}[2][]{
  colback=white,
  breakable,
  colframe=blue!80!black,
  coltitle=white,
  title=Prompt~\thetcbcounter: #2,
  fonttitle=\bfseries\normalsize,
  boxrule=1pt,
  arc=2mm,
  top=2mm,
  bottom=2mm,
  width=1.0\textwidth,
  nameref={Prompt},                 %
  #1 
}

\newtcolorbox[auto counter, number within=section]{examplebox}[2][]{
  colback=white,
  colframe=black,
  breakable,    
  boxrule=0.5pt,
  sharp corners,
  left=4pt,
  right=4pt,
  top=4pt,
  bottom=4pt,
  enhanced,
  title=Example~\thetcbcounter: #2,
  #1
}
\usepackage{pifont}

\newcommand{\cmark}{\ding{51}}

\usepackage{amsmath,amsfonts,bm}

\def\eqref#1{equation~\ref{#1}}

\def\1{\bm{1}}

\DeclareMathAlphabet{\mathsfit}{\encodingdefault}{\sfdefault}{m}{sl}
\SetMathAlphabet{\mathsfit}{bold}{\encodingdefault}{\sfdefault}{bx}{n}

\theoremstyle{plain}

\newaliascnt{proposition}{theorem}
\newtheorem{proposition}[proposition]{Proposition}
\aliascntresetthe{proposition}

\newaliascnt{lemma}{theorem}

\aliascntresetthe{lemma}

\newaliascnt{corollary}{theorem}

\aliascntresetthe{corollary}

\theoremstyle{definition}
\newaliascnt{definition}{theorem}
\newtheorem{definition}[definition]{Definition}
\aliascntresetthe{definition}

\newaliascnt{assumption}{theorem}
\newtheorem{assumption}[assumption]{Assumption}
\aliascntresetthe{assumption}

\theoremstyle{remark}
\newaliascnt{remark}{theorem}

\aliascntresetthe{remark}

\newcommand{\ours}{TAPE\xspace}

\newcommand{\eqautoref}[1]{\hyperref[#1]{Equation~(\ref*{#1})}}

\usepackage[textsize=tiny]{todonotes}

\icmltitlerunning{\ours: Tool-Guided Adaptive Planning and Constrained Execution in LM Agents}

\begin{document}

\twocolumn[
  \icmltitle{TAPE: Tool-Guided Adaptive Planning and Constrained Execution\\in Language Model Agents}

  \icmlsetsymbol{equal}{*}

  \begin{icmlauthorlist}
    \icmlauthor{Jongwon Jeong}{uwm}
    \icmlauthor{Jungtaek Kim}{uwm}
    \icmlauthor{Kangwook Lee}{uwm,krafton,ludo}
  \end{icmlauthorlist}

  \icmlaffiliation{uwm}{Electrical and Computer Engineering, University of Wisconsin--Madison}
  \icmlaffiliation{krafton}{KRAFTON}
  \icmlaffiliation{ludo}{Ludo Robotics}

  \icmlcorrespondingauthor{Kangwook Lee}{kangwooklee@krafton.com}

  \icmlkeywords{Tool-Guided Adaptive Planning, Tool-Guided Constrained Execution, Language Model Agents}
  \vskip 0.3in
]

\printAffiliationsAndNotice{}  %

\begin{abstract}
Language Model (LM) agents have demonstrated remarkable capabilities in solving tasks that require multiple interactions with the environment. 
However, they remain vulnerable in environments where a single error often leads to irrecoverable failure, particularly under strict feasibility constraints.
We systematically analyze existing agent frameworks, identifying imperfect planning and stochastic execution as the primary causes.
To address these challenges, we propose \textbf{T}ool-guided \textbf{A}daptive \textbf{P}lanning with constrained \textbf{E}xecution (\textbf{TAPE}).
TAPE enhances planning capability by aggregating multiple plans into a graph and employing an external solver to identify a feasible path.
During execution, TAPE employs constrained decoding to reduce sampling noise, while adaptively re-planning whenever environmental feedback deviates from the intended state.
Experiments across Sokoban, ALFWorld, MuSiQue, and GSM8K-Hard demonstrate that TAPE consistently outperforms existing frameworks, with particularly large gains on hard settings, improving success rates by 21.0 percentage points on hard settings on average, and by 20.0 percentage points for weaker base models on average. Code and data available at \href{https://github.com/UW-Madison-Lee-Lab/TAPE}{here}.
\vspace{-10pt}
\end{abstract}

\section{Introduction}

Language Models (LMs) have evolved into agents capable of understanding and interacting with external environments such as computers~\citep{LLMAgentSurvey}, simulation platforms~\citep{Smallville}, and physical robot environments~\citep{RobotLLMSurvey}.
In these applications, tools serve as interfaces that enable LM agents to reason, act, and observe within these environments~\citep{LLMAgentParadigmSurvey}.
For instance, search tools retrieve information from databases~\citep{SearchR1}, mouse and keyboard tools interact with computer interfaces~\citep{OSWorldAgent}, coding tools execute programs within computers~\citep{PAL}, and robotic tools perform actions in the physical world~\citep{CoPAL}.
By generating executable prompts for tool usage, LM agents can iteratively think and act while receiving observations from the environment, referred to as \emph{ReAct} frameworks~\citep{ReAct, Reflexion, RAP, ToT, ToolChain, WKM, ReflAct}.
Through this iterative process, they can solve complex tasks that require planning and adaptation~\citep{TALM, PAL}.

\begin{figure*}[t]
\centering
\includegraphics[width=.95\linewidth]{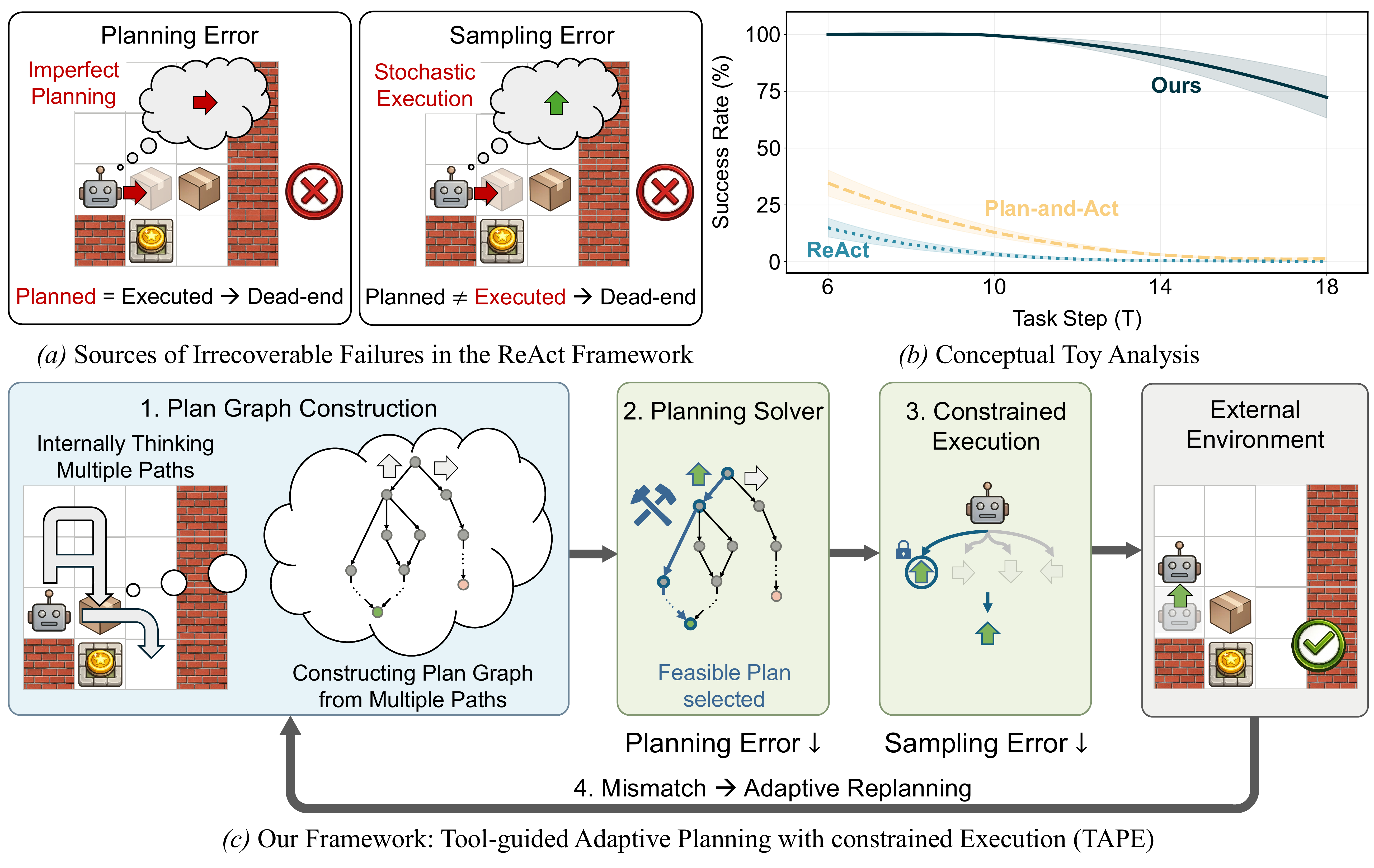}
\vspace{-8pt}
\caption{\textbf{Overview.} We illustrate our work using Sokoban, where the goal is to push all boxes onto target locations.
\textbf{(a) Sources of Irrecoverable Failure in the ReAct Framework.} A planning error occurs when the internal reasoning suggests a non-viable action (e.g., pushing a box against a wall); this makes the goal unachievable as the agent cannot pull the box from the wall, while a sampling error arises when LM stochasticity leads to an action deviating from the plan.
\textbf{(b) Conceptual Toy Analysis.} We model simplified agents by injecting planning and sampling errors into a feasible policy for Sokoban. We measure success rates as the task step $T$ increases, observing that existing frameworks degrade rapidly as $T$ grows. See \autoref{app:concept_proof_details} for details.
\textbf{(c) Our Framework.} \ours generates and aggregates multiple plans into a graph and uses a solver to select a feasible path, thereby reducing planning errors. Then, it enforces constrained execution to suppress sampling errors.}
\label{fig:motivation}
\vspace{-16pt}
\end{figure*}

Although the ReAct frameworks have shown impressive capabilities in various domains~\citep{AgentBench}, these frameworks are highly ineffective when even a few mistakes can cause irrecoverable harms, making it infeasible to achieve the goal thereafter.
In practice, feasibility constraints, such as time and cost budgets, limits on tool usage, and strict safety requirements, are one common source of irrecoverable failures.
For instance, coding agents often operate under latency or API-cost budgets~\citep{LLMCompiler, BudgetTool, BATS}, robotic agents must satisfy strict safety constraints~\citep{PlugSafety, CaStL}, and life-simulation agents face API-usage limits~\citep{Smallville, InfectedSmallVille}.
Under such constraints, incorrect tool use can exhaust the remaining budget or violate constraints, leaving no feasible sequence of tools that reaches the goal.
This motivates our research question:
\begin{tcolorbox}[colback=yellow!20,colframe=yellow!50!black,
                  boxrule=0.8pt,arc=3pt]
\emph{How can LM agents maximize the success rate on tasks in the presence of irrecoverable failures?}
\end{tcolorbox}

To address this question, we first analyze ReAct frameworks and identify two distinct sources of irrecoverable failures: \emph{planning error}~\citep{LLMPlanningCapability} and \emph{sampling error}~\citep{ETAL}.
A planning error occurs when an agent's internal planning (i.e., reasoning) is imperfect, leading it to recommend a non-viable action (see \autoref{fig:motivation}a, left).
A sampling error can occur even when the agent's internal reasoning is correct, because stochastic token generation may generate an action different from the planned one (see \autoref{fig:motivation}a, right).
By simulating a simplified agent where we inject these errors into a feasible policy, we find that both failures compound as the task horizon increases, reducing the overall success rate (see \autoref{fig:motivation}b).
Similar issues persist for Plan-and-Act (PA) frameworks~\citep{PlanAndSolve, PlanAndAct, AdaPlanner}.
While generating a full plan before execution mitigates sampling errors, PA remains brittle to planning errors, resulting in suboptimal success rates in our analysis (see \autoref{fig:motivation}b). Refer to \autoref{sec:theoretical_analysis} for the theoretical validation of this analysis.

To mitigate failures from both error sources, we propose an external \textbf{T}ool-guided \textbf{A}daptive \textbf{P}lanning with constrained \textbf{E}xecution framework (\textbf{\ours}), as shown in \autoref{fig:motivation}c. 
Specifically, our framework generates multiple candidate plans and aggregates them into a plan graph, then uses an external solver (e.g., Integer Linear Programming (ILP)~\citep{LP}) to select a feasible path, mitigating failures due to planning errors by optimally selecting among diverse candidates. 
Next, our method enforces constrained execution by constrained decoding~\citep{GuidedGeneration} to the selected next action, which suppresses sampling errors. 
At each step, if \ours detects a mismatch between the predicted and realized observations, it updates the plan graph and re-selects a feasible plan. As shown in \autoref{fig:motivation}b, our framework achieves a higher success rate than other frameworks.

We evaluate the proposed framework on benchmarks built from Sokoban~\citep{Sokoban}, ALFWorld~\citep{ALFWorld}, MuSiQue~\citep{MuSiQue} and GSM8K-Hard~\citep{PAL} by adding feasibility constraints, e.g., budgets or tool/action limits, that make mistakes difficult to recover from.
Experimental results show that our framework consistently outperforms the ReAct frameworks across all benchmarks. The performance gap is most significant in hard tasks and for weaker base models, proving that our framework effectively mitigates the irrecoverable failures.
Finally, our ablation study confirms that all proposed components are essential for achieving success in environments where failures are irrecoverable.

To sum up, our contributions are as follows:
\begin{itemize}[itemsep=0.7mm, parsep=1pt, leftmargin=*, topsep=0pt, partopsep=0pt]
\item We formalize planning and sampling errors as two sources of irrecoverable failures in the ReAct frameworks, and theoretically characterize their impact on success probability under feasibility constraints.
\item We propose an external tool-guided adaptive planning with constrained execution framework. \ours mitigates planning errors via solver-based path selection over a plan graph with replanning, and reduces sampling errors via constrained decoding of planned actions.
\item We construct constrained variants of existing agent benchmarks and show consistent success-rate improvements over other frameworks.
\end{itemize}

\section{Problem Formulation} \label{sec:problem_formulation}

\subsection{Agentic Task}
We consider an \emph{agentic task} in which an LM agent iteratively interacts with an external environment to solve a given problem.
We formalize this process as a goal-conditioned Markov Decision Process (G-MDP)~\citep{nasiriany2019planning},
denoted by $\mathcal{M} = (\mathcal{G}, \mathcal{A}, \mathcal{S}, P, R)$.
Here, $\mathcal{G}$ is the goal space,
$\mathcal{A}$ is the action space,
$\mathcal{S}$ is the state space,
$P$ denotes the transition dynamics,
and $R: \mathcal{G} \times \mathcal{S} \to \{0, 1\}$ is a goal-dependent reward function, where we write $R_g(s)$.
At the beginning of an episode, a goal $g \in \mathcal{G}$ is given.
When actions incur costs and the agent operates under budgets, we define a cost function $C: \mathcal{S} \times \mathcal{A} \to \mathbb{R}^k$ and a budget vector $B \in \mathbb{R}^k$.
The state $s_t \in \mathcal{S}$ is defined as the interaction history up to timestep $t$,
i.e., $s_t = (a_0, o_0, a_1, o_1, \dots, a_{t-1}, o_{t-1})$ with $s_0 = \emptyset$.
At each timestep $t \in \{0,1,\dots\}$,
the agent selects an action $a_t \in \mathcal{A}$ conditioned on $g$ and $s_t$.
The environment responds with an observation $o_t = (o_t^{\mathrm{status}}, o_t^{\mathrm{return}})$,
where $o_t^{\mathrm{status}}$ indicates the execution status (e.g., success or failure),
and $o_t^{\mathrm{return}}$ denotes the returned value (e.g., tool outputs or error messages).
If the task considers budget constraints, the observation includes the remaining-budget component, i.e., $o_t=(o_t^{\mathrm{status}},o_t^{\mathrm{return}},o_t^{\mathrm{budget}})$.
Given $s_t$ and $a_t$,
the next state $s_{t+1}$ is realized according to $P(\cdot \mid s_t, a_t)$,
which corresponds to appending $(a_t, o_t)$ to $s_t$.
An episode $\tau=(s_0,a_0,s_1,\dots,s_T)$ is successful if it terminates at a state $s_T$ such that $R_g(s_T)=1$ and, when budget constraints are present, $\sum_{t=0}^{T-1} C(s_t,a_t) \preceq B$, where $\preceq$ denotes element-wise inequality.
Such budgets can represent time limits, cost limits, limits on the number of tool calls, or so on~\citep{BudgetTool, BATS, ma2026timely}.
If the agent enters a dead-end state (e.g., by violating the budget constraint), from which reaching the goal is impossible regardless of subsequent actions and transitions, we terminate the episode at $s_T$ and set $R_g(s_T)=0$.

\subsection{Language Model Agent Framework}\label{sec:language_model_agents}

An LM agent is defined as a stochastic policy $\pi_\theta(a_t \mid s_t, g)$ that selects an action given the current environment state $s_t$ and goal $g$. Since the agent cannot observe the concrete return values until it actually interacts with the environment, it must rely on an internal world model to simulate and evaluate potential trajectories beforehand~\citep{RAP}.
To facilitate this internal planning, the agent operates on an abstract state $\hat{s}_t \coloneqq z(s_t) \coloneqq (a_0, o_0^{\mathrm{status}}, \dots, a_{t-1}, o_{t-1}^{\mathrm{status}})$, which distills the raw history into essential execution statuses. In a constrained G-MDP, this representation is extended to $\hat{s}_t^{c} \coloneqq z^{c}(s_t) \coloneqq (a_0, (o_0^{\mathrm{status}}, o_0^{\mathrm{budget}}), \dots, a_{t-1}, (o_{t-1}^{\mathrm{status}}, o_{t-1}^{\mathrm{budget}}))$ to track resource consumption. We assume the agent has an internal world model $P_\theta(\hat{s}_{t+1} \mid \hat{s}_t, a_t)$, $R_{g,\theta}(\hat{s}_T)$, and $C_\theta(\hat{s}_t^{c}, a_t)$ that approximate the environment's true transition dynamics $P$, abstract reward $R_g$, and cost function $C$.
Using the internal world model, the LM agent can perform internal planning to identify an action path that maximizes rewards.

\paragraph{ReAct Framework.}
Following ReAct~\citep{ReAct}, we decompose the agent's decision-making into two stages, ``Think'' and ``Act''.
At each step $t$, ``Think'' contains various types of textual reasoning~\citep{ReAct, Reflexion, ReflAct} or advanced planning strategy~\citep{RAP, RAFA, ToT, ToolChain, WKM, DGAP, ThoughtOfSearch}.
We interpret this ``Think'' as \emph{planning} over an internal abstract world model~\citep{RAP}.
Starting from the current interaction state, the agent predicts future abstract states over a lookahead horizon and outputs a planned next action $\hat a_t \in \mathcal{A}$~\citep{RAFA}.
``Act'' then samples an executed action conditioned on the state and the planned action.
Accordingly, this decision process is formalized as
\begin{equation}
    \hat{a}_t \sim \pi^{\mathrm{Think}}_\theta(\cdot \mid s_t, g),
\qquad
a_t \sim \pi^{\mathrm{Act}}_\theta(\cdot \mid s_t,\hat{a}_t, g).
\end{equation}

\paragraph{Plan-and-Act Framework.}
Plan-and-Act (PA) framework leverages an internal abstract world model to perform full planning before step-wise execution~\citep{PlanAndSolve, ReWoo, MPO, PlanAndAct}.
Given a goal $g$, the agent explores the abstract state space and produces an abstract plan
$\hat{\tau}=(\hat{s}_0^{\hat \tau},\bar{a}_0,\hat{s}_1^{\hat \tau},\bar{a}_1,\dots,\hat{s}_T^{\hat \tau})$,
where the planned action at step $t$ is induced by the internal world model $(P_\theta,R_{g,\theta},C_\theta)$ through planning, i.e.,
$\bar a_t \sim \pi^{\mathrm{Think}}_\theta(\cdot\mid \hat s_t^{\hat \tau}, g)$.
Once the plan $\hat\tau$ is provided in-context, the agent conditions its step-wise decision on the current interaction state $s_t$ and the plan guidance.
We model PA's in-context use of the provided plan as follows.
When the plan is applicable at step $t$, meaning that the current abstract state matches the abstract state, $z(s_t)=\hat s_t^{\hat \tau}$, the PA agent directly follows the planned action with probability $p\in[0,1]$.
Otherwise, it falls back to the ReAct-style decision process.
Formally, if $z(s_t)=\hat s_t^{\hat \tau}$,
\begin{equation}\label{eq:plan_and_act}
a_t =
\begin{cases}
\bar a_t, & \text{with probability} \ \ p_{\mathrm{follow}},\\
a_t^{\mathrm{ReAct}}, & \text{with probability} \ \ 1-p_{\mathrm{follow}},
\end{cases}
\end{equation}
and if $z(s_t)\neq\hat s_t^{\hat \tau}$, $a_t=a_t^{\mathrm{ReAct}}$, where $\tilde a_t \sim \pi^{\mathrm{Think}}_\theta(\cdot\mid s_t, g),a_t^{\mathrm{ReAct}} \sim \pi^{\mathrm{Act}}_\theta(\cdot\mid s_t,\tilde a_t, g).$
Here, $\tilde a_t$ denotes the planned next action produced by ReAct.

\subsection{Errors from Two Different Sources}
\label{subsec:error_sources}

LM agents are prone to fail agentic tasks due to internal \emph{planning error} and \emph{sampling error} in LMs.
In practice, planning error arises from limited planning capability or mismatch between the agent's internal world model and the true world model~\citep{LLMPlanningCapability}.
The sampling error can arise from stochastic token generation, prompt sensitivity, and formatting drift~\citep{ETAL}.
To quantify both errors, in G-MDP, we define viable actions as in \autoref{def:viable_action_set_main}.

\begin{definition}
\label{def:viable_action_set_main}
Define the viable action set as
\begin{align}
&\mathcal{A}_{\mathrm{viable}}(s_t) \nonumber\\
&\coloneqq\!\Big\{a\in \mathcal{A} \big| \exists K \ge 1, \Pr\!\big(R_g(s_{t+K}) = 1 \!\!\mid\!\! s_t, a\big) > 0
\Big\}.
\end{align}
\end{definition}

Using the viable action set, we formalize planning and sampling errors as follows. For simplicity, we assume a constant per-step error rate that is time-invariant.

\begin{assumption}[Planning Error]
\label{ass:planning_error}
Let $\hat a_t$ denote the agent's planned action at step $t$.
For simplicity, we model the planning error as the constant
\begin{equation}
\label{eq:delta_def_main}
\epsilon_{\mathrm{p}} \coloneqq \Pr\left(\hat a_t \notin \mathcal{A}_{\mathrm{viable}}(s_t)\right).
\end{equation}
\end{assumption}

\begin{assumption}[Sampling Error]
\label{ass:sampling_error}
Let $\hat a_t$ denote the agent's planned action at step $t$.
For simplicity, we model sampling error by the following constant rates.
\begin{equation}
\epsilon_{\mathrm{s}} \coloneqq \Pr(a_t \neq \hat a_t),
\end{equation}
Moreover, we assume that $\epsilon_{\mathrm{s}}$ does not depend on whether the planned action is viable, i.e.,
$\Pr(a_t \neq \hat a_t \mid \hat a_t \in \mathcal{A}_{\mathrm{viable}}(s_t))
=
\Pr(a_t \neq \hat a_t \mid \hat a_t \notin \mathcal{A}_{\mathrm{viable}}(s_t))
=
\epsilon_{\mathrm{s}}$.
\end{assumption}

We further characterize the consequences of sampling error as $\delta_{\mathrm{b}}$ and $\delta_{\mathrm{r}}$ in \autoref{ass:sampling_error_consequences}. $\delta_{\mathrm{b}}$ quantifies how often the sampling error breaks viability when the planned action is viable, while $\delta_{\mathrm{r}}$ captures how often the sampling error recovers viability when the planned action is non-viable.

\begin{assumption}
\label{ass:sampling_error_consequences}
When the planned action $\hat{a}$ is viable, $\delta_{\mathrm{b}}$ denotes the probability that deviating from it breaks viability as $\delta_{\mathrm{b}}
\coloneqq
\Pr\!\big(a_t \notin \mathcal{A}_{\mathrm{viable}}(s_t)\ \big|\ 
\hat a_t \in \mathcal{A}_{\mathrm{viable}}(s_t),\ a_t \neq \hat a_t\big)$.
On the other hand, when the planned action is non-viable, $\delta_{\mathrm{r}}$ denotes the probability that deviating from it recovers viability as $\delta_{\mathrm{r}}
\!\coloneqq\!
\Pr\!\big(a_t \!\in\! \mathcal{A}_{\mathrm{viable}}(s_t)\ \big|\ 
\hat{a}_t \!\notin\! \mathcal{A}_{\mathrm{viable}}(s_t),\ a_t \!\neq\! \hat{a}_t\big)$. 
\end{assumption}

\begin{figure*}[t]
    \centering
    \includegraphics[width=1.0\linewidth]{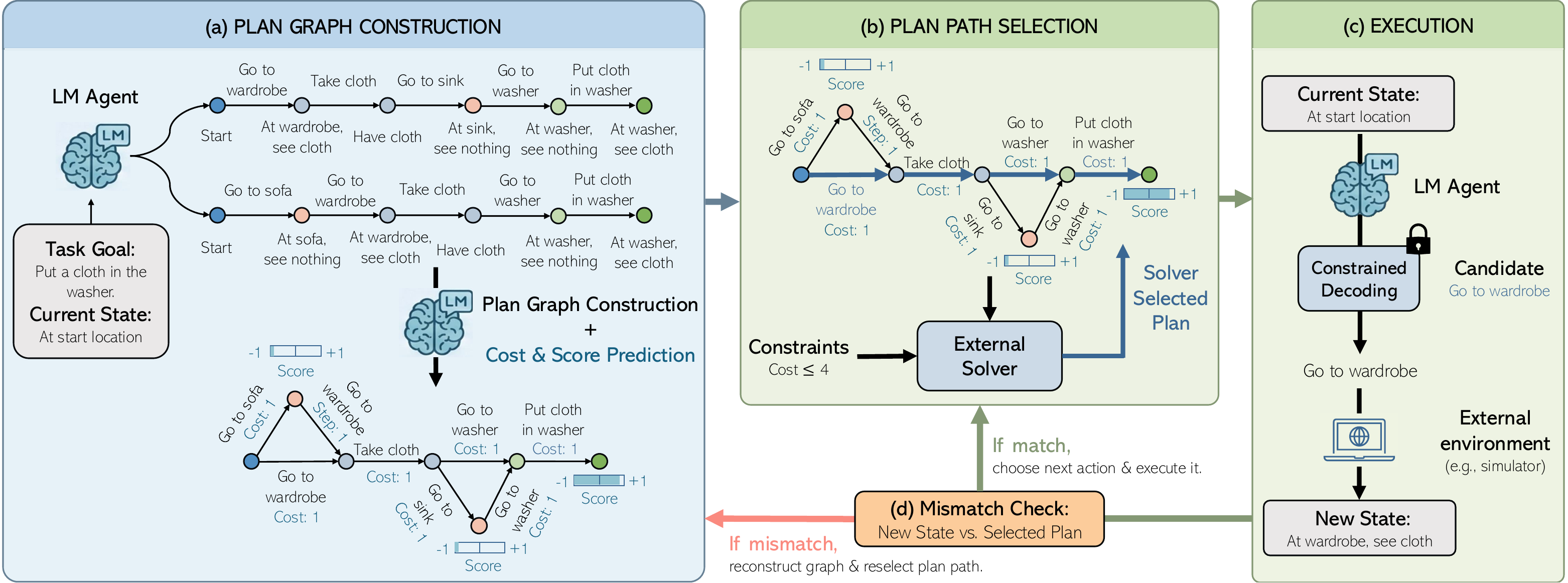}
\vspace{-15pt}
\caption{
    \textbf{Overview of \ours.}
    The proposed framework consists of four steps.
    (a) \textbf{Plan Graph Construction}: The LM samples multiple trajectories, which are aggregated into a plan graph with predicted costs and scores.
    (b) \textbf{Plan Path Selection}: An external solver (e.g., ILP) identifies the optimal path (blue arrows) subject to constraints.
    (c) \textbf{Constrained Execution}: The agent executes the selected actions using constrained decoding to eliminate sampling errors.
    (d) \textbf{Mismatch Check}: If a mismatch occurs between the predicted and realized state, the agent re-performs plan graph construction and path selection; otherwise, the agent executes the next planned action.
}
\label{fig:method_overview}
\vspace{-16pt}
\end{figure*}

\section{Our Approach: \ours}

In this section, we introduce \textbf{\ours} (\textbf{T}ool-guided \textbf{A}daptive \textbf{P}lanning with constrained \textbf{E}xecution), a framework designed to reduce both planning and sampling errors. First, our framework performs (i) \emph{plan graph construction} by recombining multiple abstract trajectories generated by an LM, which increases the probability that a feasible plan exists within the planning space (\autoref{sec:plan_graph_construction}). Next, \ours operates (ii) \emph{plan path selection} on this graph using an external solver, such as Integer Linear Programming (ILP)~\citep{LP}, based on predicted costs and rewards (\autoref{sec:path_selection}).
Given the constructed plan graph and its predicted costs and rewards, the solver returns an optimal feasible path within the graph when one exists, effectively reducing planning errors.
Then, our framework performs (iii) \emph{constrained execution} by restricting the action space at decoding time~\citep{GuidedGeneration} so that only the next prescribed action on the selected path is admissible, thereby suppressing sampling errors (\autoref{sec:constrained_execution}). Lastly, to remain robust against mismatches between realized observations and the internal plan, the framework adaptively re-performs (iv) \emph{plan graph construction and path selection} using newly observed information whenever a discrepancy arises (\autoref{sec:mismatch_replanning}). A conceptual overview on ALFWorld~\citep{ALFWorld} is shown in \autoref{fig:method_overview}.

\subsection{Plan Graph Construction}\label{sec:plan_graph_construction}

\paragraph{Graph Construction.} To construct a plan graph $\mathcal G=(V,E)$, where $v\in V$ is a state node and $e\in E$ is an edge representing an action $\bar a$, \ours first generates $M$ abstract plans $\hat\tau^{(m)}=\left(\hat s^{(m)}_0,\bar a^{(m)}_0,\hat s^{(m)}_1,\bar a^{(m)}_1,\dots,\hat s^{(m)}_{L_m}\right)$, where $m=1,\dots,M$, $\hat s^{(m)}_\ell\in\hat{\mathcal S}$, and $\bar a^{(m)}_\ell\in\mathcal A$.
Then, our framework folds these sampled paths by merging states that share the same core information.
Specifically, \ours merges multiple abstract states $\hat{s}$ into a single node $v \in V$ if they represent the same observation and task progress.
For example, in ALFWorld, many distinct states (e.g., different intermediate tool outputs or error messages) correspond to the same node representing the agent's current location, the objects in the inventory, and the task progress (such as whether a required object has been found, cleaned, or placed).
We denote this merging function as $f_\theta: \hat{\mathcal{S}} \to \mathcal{V}$.
For the merged nodes, \ours reassigns edges as $e_{ij}=\big(v_i \xrightarrow{\ \bar a^{(m)}_\ell} v_j\big)\in E$,
where $v_i = f_\theta(\hat s^{(m)}_\ell)$ and $v_j = f_\theta(\hat s^{(m)}_{\ell+1})$ for each consecutive pair in the sampled plans.

\paragraph{Score and Cost Prediction.}
Using the internal abstract world model $(P_\theta, R_{g,\theta}, C_\theta)$ in LMs, \ours assigns a scalar reward to each node $u \in V$ and a predicted cost to each edge $e \in E$.
Specifically, for terminal nodes $v_{\mathrm{ter}}$, our framework estimates the expected reward as $\hat{r}_\theta(u) \coloneqq \mathbb{E}_\theta [R_{g,\theta}(\hat{s}) \mid f_\theta(\hat{s}) = v_{\mathrm{ter}}]$, representing the expected reward of being in $v_{\mathrm{ter}}$.
For each edge $e = (v \xrightarrow{a} v') \in E$, \ours predicts the cost vector as $\hat{\mathbf{c}}_\theta(e) \coloneqq \mathbb{E}_\theta [\mathbf C_\theta(\hat s, a) \mid f_\theta(\hat s) = v]$, which estimates the expected budget consumption of executing the edge-labeled action at node $v$.
In constrained settings, these predicted costs $\{\hat{\mathbf{c}}_\theta(e)\}_{e \in E}$ are used to enforce feasibility under the remaining budget $\mathbf{b}_t$.

\subsection{Plan Path Selection}
\label{sec:path_selection}
Given the graph $\mathcal G=(V,E)$, the current node $v_0$, and the set of terminal nodes $V_{\mathrm{ter}}\subseteq V$, our framework selects a single directed walk on $\mathcal G$ by solving an optimization problem via an external solver (e.g., Integer Linear Programming (ILP)~\citep{LP}).
To obtain a tractable finite-size ILP, the solver optimizes over a finite horizon $L_{\max} \ge 1$ using a time-expanded formulation. For each edge $e=(v\xrightarrow{\ a\ }v')\in E$ and step $\ell\in\{0,1,\dots,L_{\max}-1\}$, let $x_{e,\ell}\in\{0,1\}$ indicate whether the walk takes edge $e$ at step $\ell$. $\mathrm{src}(e)=v$ and $\mathrm{tgt}(e)=v'$ are the source and target of $e$. In the G-MDP setting, the ILP formulation is as follows:
\begin{align}
\max_{x}\quad
& \sum_{\ell=0}^{L_{\max}-1}\ \sum_{e\in E} \hat r_\theta\big(\mathrm{tgt}(e)\big)\,x_{e,\ell} \label{eq:obj} \\
\text{s.t.}\quad & \,\,\,\, \sum_{e\in E} x_{e,\ell}=1, \quad \forall \ell=0,\dots,L_{\max}-1, \label{eq:single_action} \\
& \,\,\,\,\sum_{\mathclap{e\in E:\ \mathrm{src}(e)=v_0}} x_{e,0}=1, \label{eq:start_node} \\
& \,\,\,\,\sum_{\mathclap{e\in E:\ \mathrm{tgt}(e)\in V_{\mathrm{ter}}}} x_{e,L_{\max}-1}=1,\label{eq:goal_node} \\
& \,\,\,\,\begin{aligned}
&\sum_{\mathclap{e\in E: \mathrm{tgt}(e)=v}} x_{e,\ell-1} = \sum_{\mathclap{e\in E:\mathrm{src}(e)=v}} x_{e,\ell}, \\
& \quad\quad \forall v \in V,\forall \ell \in \{1, \dots, L_{\max}-1\}
\label{eq:flow_conservation}\end{aligned}
\\
& \,\,\,\, x_{e,\ell}\in\{0,1\}, \quad \forall e\in E,\ \forall \ell. \label{eq:binary_var}
\end{align}

\eqautoref{eq:obj} maximizes the total accumulated reward of the visited nodes.
The constraints ensure the validity of the selected path.
Specifically, \eqautoref{eq:single_action} ensures that the agent takes exactly one action at each step.
\eqautoref{eq:start_node} and \eqautoref{eq:goal_node} specify that the path starts at $v_0$ and ends at one of the terminal nodes in $V_{\mathrm{ter}}$ at step $L_{\max}$.
Finally, \eqautoref{eq:flow_conservation} ensures that if the agent arrives at node $v$ at step $\ell-1$, it must depart from $v$ at step $\ell$, thereby forming a continuous walk.
If there exist budget constraints, we can formulate the optimization problem by adding the budget constraints, shown in \autoref{app:constrained_ilp}.

\subsection{Constrained Execution}\label{sec:constrained_execution}

Given the optimal path selected by the solver, denoted as $\pi^\star=(v^{\pi^\star}_0 \xrightarrow{\bar a^\star_0} v^{\pi^\star}_1 \xrightarrow{\bar a^\star_1}\cdots)$, \ours executes the plan by restricting the admissible action set to the prescribed action whenever the path is applicable.
Let $v_t$ be the current node in the environment at step $t$.
If the current node matches the planned node (i.e., $v_t = v^{\pi^\star}_t$), we enforce the LM $\pi_\theta^{\mathrm{Act}}$ to generate the planned action $\bar{a}^\star_t$ as $a_t$.
This is implemented via constrained decoding~\citep{GuidedGeneration}, which constrains the LM to follow $\bar a^\star_t$ by fixing the tool choice and enforcing the exact tool-call format.

\subsection{Mismatch Check and Replanning}
\label{sec:mismatch_replanning}

In practice, a mismatch between the predicted state in the plan and the realized state can invalidate the current path.
For example, the environment might transition to an unexpected state, or the remaining budget might evolve differently than estimated.
To address this, our framework verifies the consistency between the real state node $\hat s_{t+1}$ and the predicted state node $v_{t+1}^{\pi^\star}$.
Specifically, if \ours confirms that $v_{t+1}$ matches $v_{t+1}^{\pi^\star}$, \ours proceeds to execute the next planned action along $\pi^\star$.
Conversely, if our framework detects a mismatch, \ours regenerates multiple abstract plans, constructs a new plan graph with updated costs and rewards, and solves for a new optimal path via the external solver.

\section{Theoretical Analysis} \label{sec:theoretical_analysis}

In this section, we analyze the success probability of agents within the framework in \autoref{sec:language_model_agents}.
Based on the planning and execution errors in \autoref{subsec:error_sources}, we explain how these errors propagate and reduce the overall success probability.
We define the success probability as follows.

\begin{definition}[Success Probability]
\label{def:success_event}
Let $\tau=(s_0,a_0,s_1,\dots)$ denote a random trajectory generated by the agent, and let
$T_g \coloneqq \min\{t \ge 0 : R_g(s_t)=1\}$
be the first goal-reaching step, with the convention $\min\emptyset \coloneqq \infty$.
We define the success event as $\mathcal S \coloneqq \left\{ T_g < \infty \right\} \cap 
\bigcap_{t=0}^{T_g-1}\left\{a_t \in \mathcal{A}_{\mathrm{viable}}(s_t)\right\}$, and we refer to $\Pr(\mathcal S)$ as the success probability.
\end{definition}

\subsection{Existing Frameworks}
\label{sec:theoretical_analysis_existing}

From \autoref{sec:problem_formulation}, we derive the upper bound of the success probability of the ReAct framework as follows.

\begin{proposition}[]
\label{prop:react_success_rate}
Assume that any successful trajectory must take at least $T$ action selections.
Then, the ReAct success probability is bounded by $U_{\mathrm{ReAct}}$:
\begin{equation}
\label{eq:react_success_upper}
\Pr(\mathcal{S})
\le
U_{\mathrm{ReAct}} \coloneqq
\Big((1-\epsilon_{\mathrm{p}})(1-\epsilon_{\mathrm{s}}\delta_{\mathrm{b}}) + \epsilon_{\mathrm{p}}\epsilon_{\mathrm{s}}\delta_{\mathrm{r}}\Big)^{T},
\end{equation}
where equality holds when the task exactly ends at $T$. If $(1-\epsilon_{\mathrm{p}})\delta_{\mathrm{b}} \ge \epsilon_{\mathrm{p}}\delta_{\mathrm{r}}$, $U_{\mathrm{ReAct}}$ increases as $\epsilon_{\mathrm{p}}$ and $\epsilon_{\mathrm{s}}$ decrease.
\end{proposition}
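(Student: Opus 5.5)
We reduce the statement to a per-step viability probability and then chain it over the horizon. Let $q \coloneqq \Pr\big(a_t \in \mathcal{A}_{\mathrm{viable}}(s_t) \mid s_t\big)$. We first compute $q$ from \autoref{ass:planning_error}, \autoref{ass:sampling_error}, and \autoref{ass:sampling_error_consequences} by the law of total probability, partitioning on whether $\hat a_t$ is viable and whether $a_t = \hat a_t$.
\begin{itemize}
\item If $\hat a_t$ is viable (probability $1-\epsilon_{\mathrm{p}}$), then $a_t$ is viable either because $a_t=\hat a_t$ (probability $1-\epsilon_{\mathrm{s}}$) or because $a_t\neq\hat a_t$ and viability is not broken (probability $\epsilon_{\mathrm{s}}(1-\delta_{\mathrm{b}})$). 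These sum to $1-\epsilon_{\mathrm{s}}\delta_{\mathrm{b}}$.
\item If $\hat a_t$ is non-viable (probability $\epsilon_{\mathrm{p}}$), $a_t$ is viable only when $a_t\ne\hat a_t$ and viability is recovered, which has probability $\epsilon_{\mathrm{s}}\delta_{\mathrm{r}}$.
\end{itemize}
Here \autoref{ass:sampling_error} is what lets us use the same $\epsilon_{\mathrm{s}}$ in both branches. The result is
\[
q=(1-\epsilon_{\mathrm{p}})(1-\epsilon_{\mathrm{s}}\delta_{\mathrm{b}})+\epsilon_{\mathrm{p}}\epsilon_{\mathrm{s}}\delta_{\mathrm{r}}.
\]
Since the rates are assumed constant and time-invariant, $q$ does not depend on $t$ or on the history.

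Next we bound $\Pr(\mathcal S)$. Success requires $T_g<\infty$ and $a_t$ viable for every $t<T_g$. By the hypothesis, any successful trajectory has $T_g\ge T$. Hence
\[
\mathcal S\subseteq\bigcap_{t=0}^{T-1}\{a_t\in\mathcal A_{\mathrm{viable}}(s_t)\}.
\]
We expand the probability of this intersection by the chain rule, conditioning on the history $s_t$ at each step. Each conditional factor equals $q$ by the previous paragraph, so the product is $q^T$. This gives $\Pr(\mathcal S)\le q^T = U_{\mathrm{ReAct}}$.

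The main obstacle is justifying this conditional factorization. We need that conditioning on viable actions at earlier steps does not change the per-step error rates. We treat this as the content of the "constant per-step error rate" modeling assumption and state explicitly that the probabilities in the assumptions hold conditionally on any history.

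For equality when the task ends exactly at $T$, we argue the reverse inclusion. If $T_g=T$ deterministically whenever all $T$ actions are viable, then the intersection above equals $\mathcal S$. Here we read the viability definition as saying that a viable path of exactly $T$ steps reaches the goal, and we treat this as the intended idealization.

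For monotonicity we differentiate $q$, which suffices because $q\in[0,1]$ and $x\mapsto x^T$ is increasing on $[0,1]$. In $\epsilon_{\mathrm{p}}$,
\[
\frac{\partial q}{\partial \epsilon_{\mathrm{p}}}=-(1-\epsilon_{\mathrm{s}}\delta_{\mathrm{b}})+\epsilon_{\mathrm{s}}\delta_{\mathrm{r}} = -\big(1-\epsilon_{\mathrm{s}}(\delta_{\mathrm{b}}+\delta_{\mathrm{r}})\big).
\]
This is $\le 0$ provided $\epsilon_{\mathrm{s}}(\delta_{\mathrm{b}}+\delta_{\mathrm{r}})\le 1$. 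That condition is not automatic, since $\delta_{\mathrm{b}}+\delta_{\mathrm{r}}$ can reach $2$, so the $\epsilon_{\mathrm{p}}$ claim needs an extra assumption; a natural one is $\epsilon_{\mathrm{s}}(\delta_{\mathrm{b}}+\delta_{\mathrm{r}})\le 1$, which holds for instance when $\epsilon_{\mathrm{s}}\le 1/2$. The stated hypothesis $(1-\epsilon_{\mathrm{p}})\delta_{\mathrm{b}}\ge\epsilon_{\mathrm{p}}\delta_{\mathrm{r}}$ does not supply it. In $\epsilon_{\mathrm{s}}$,
\[
\frac{\partial q}{\partial \epsilon_{\mathrm{s}}}=-(1-\epsilon_{\mathrm{p}})\delta_{\mathrm{b}}+\epsilon_{\mathrm{p}}\delta_{\mathrm{r}},
\]
which is $\le 0$ exactly under the stated hypothesis $(1-\epsilon_{\mathrm{p}})\delta_{\mathrm{b}}\ge\epsilon_{\mathrm{p}}\delta_{\mathrm{r}}$. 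So the hypothesis is precisely what is needed for the $\epsilon_{\mathrm{s}}$ direction, and the $\epsilon_{\mathrm{p}}$ direction additionally requires $\epsilon_{\mathrm{s}}(\delta_{\mathrm{b}}+\delta_{\mathrm{r}})\le 1$.
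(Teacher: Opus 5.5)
Your argument follows the paper's proof. It uses the same two-branch law-of-total-probability computation of the per-step viability probability $q$, multiplies $q$ over the first $T$ steps (the paper's ``Markov property and independence of errors'', which you state more precisely as the rates holding conditionally on every history), treats equality as the idealized case $T_g=T$, and settles monotonicity by the sign of a derivative.

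Your caveat on the $\epsilon_{\mathrm{p}}$ direction is correct, and here you are more careful than the paper, which only says ``similarly''. With $\epsilon_{\mathrm{s}}=\delta_{\mathrm{b}}=\delta_{\mathrm{r}}=1$ and $\epsilon_{\mathrm{p}}\le 1/2$ the stated hypothesis holds but $q=\epsilon_{\mathrm{p}}$, so that part of the statement really does need an extra condition such as your $\epsilon_{\mathrm{s}}(\delta_{\mathrm{b}}+\delta_{\mathrm{r}})\le 1$. Routing monotonicity through $x\mapsto x^{T}$ also avoids the paper's slip of writing $\partial U_{\mathrm{ReAct}}/\partial\epsilon_{\mathrm{s}}$ as the inner derivative raised to the $T$-th power rather than multiplied by $Tq^{T-1}$.
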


\autoref{prop:react_success_rate} implies that as $T$ grows, per-step errors compound and sharply reduce the overall success rate.
Also, if $(1-\epsilon_{\mathrm{p}})\delta_{\mathrm{b}} \ge \epsilon_{\mathrm{p}}\delta_{\mathrm{r}}$, the success probability can be increasing by reducing planning and sampling error. $(1-\epsilon_{\mathrm{p}})\delta_{\mathrm{b}} \ge \epsilon_{\mathrm{p}}\delta_{\mathrm{r}}$ is reasonable since deviations from the planned one are more likely to break viability than to recover it in agents.

Plan-and-Act (PA) framework utilizes a pre-generated plan as in-context guidance. This guidance effectively reduces execution stochasticity, lowering the sampling error. This comparison is formalized in~\autoref{prop:pa_vs_react}.

\begin{proposition}
\label{prop:pa_vs_react}
Let $\alpha \coloneqq \Pr(z(s_t) = \hat s_t^{\hat\tau})$ denote the probability that the plan is aligned at each step.
Under the same assumptions in \autoref{prop:react_success_rate}, we have the upper bound of success probability for PA as $U_{\mathrm{PA}} \coloneqq \left((1-\epsilon_{\mathrm{p}})\big(1-\epsilon_{\mathrm{s,PA}}\delta_{\mathrm{b}}\big) + \epsilon_{\mathrm{p}} \epsilon_{\mathrm{s,PA}}\delta_{\mathrm{r}}\right)^{T}$,
where $\epsilon_{\mathrm{s_\mathrm{PA}}} = (1-\alpha p_{\mathrm{follow}})\epsilon_{\mathrm{s}}$.
Consequently, we have the following:
\begin{equation}
U_{\mathrm{PA}} \ge U_{\mathrm{ReAct}},
\end{equation}
where equality holds when $\alpha=0$ or $p_{\mathrm{follow}}=0$.
\end{proposition}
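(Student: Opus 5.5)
The plan is to reduce the PA case to the ReAct computation of \autoref{prop:react_success_rate} by finding the effective per-step sampling error of PA. Then the comparison follows from monotonicity in $\epsilon_{\mathrm{s}}$.

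\textbf{Step 1: effective sampling error of PA.} Fix a step $t$ and condition on the history. Following \eqref{eq:plan_and_act}, I split on whether the plan is aligned, i.e.\ whether $z(s_t)=\hat s_t^{\hat\tau}$, which happens with probability $\alpha$.
\begin{itemize}
\item If aligned (probability $\alpha$): with probability $p_{\mathrm{follow}}$ the executed action equals the planned action $\bar a_t$ exactly, so there is no deviation. With probability $1-p_{\mathrm{follow}}$ the agent runs the ReAct step, which deviates with probability $\epsilon_{\mathrm{s}}$.
\item If not aligned (probability $1-\alpha$): the agent runs the ReAct step, which deviates with probability $\epsilon_{\mathrm{s}}$.
\end{itemize}
Identify the planned action $\hat a_t$ with $\bar a_t$ in the aligned branch and with $\tilde a_t$ otherwise. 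Treat the alignment event and the follow coin as independent of viability, in the same spirit as \autoref{ass:sampling_error}. Then the planning error is still $\epsilon_{\mathrm{p}}$, since $\bar a_t$ is produced by $\pi^{\mathrm{Think}}_\theta$ exactly like $\tilde a_t$. The deviation probability is
\[
\alpha(1-p_{\mathrm{follow}})\epsilon_{\mathrm{s}}+(1-\alpha)\epsilon_{\mathrm{s}}=(1-\alpha p_{\mathrm{follow}})\epsilon_{\mathrm{s}}=\epsilon_{\mathrm{s,PA}}.
\]
This deviation probability is also independent of viability. Deviations from the ReAct step inherit the break and recover rates $\delta_{\mathrm{b}}$ and $\delta_{\mathrm{r}}$ of \autoref{ass:sampling_error_consequences}.

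\textbf{Step 2: per-step viability and the bound.} By total probability over whether the planned action is viable and whether the agent deviates, the one-step probability that $a_t\in\mathcal A_{\mathrm{viable}}(s_t)$ is
\[
(1-\epsilon_{\mathrm{p}})(1-\epsilon_{\mathrm{s,PA}}\delta_{\mathrm{b}})+\epsilon_{\mathrm{p}}\epsilon_{\mathrm{s,PA}}\delta_{\mathrm{r}}.
\]
The ReAct argument of \autoref{prop:react_success_rate} then goes through unchanged. The success event requires at least $T$ consecutive viable actions. Chaining these conditional probabilities with the constant-rate assumptions gives the product of the $T$ one-step factors, so $\Pr(\mathcal S)\le U_{\mathrm{PA}}$.

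\textbf{Step 3: comparison.} View $U_{\mathrm{ReAct}}$ as the function $h(\epsilon)^T$ evaluated at $\epsilon=\epsilon_{\mathrm{s}}$, where
\[
h(\epsilon)=(1-\epsilon_{\mathrm{p}})(1-\epsilon\delta_{\mathrm{b}})+\epsilon_{\mathrm{p}}\epsilon\delta_{\mathrm{r}},
\]
and $U_{\mathrm{PA}}$ is the same function at $\epsilon=\epsilon_{\mathrm{s,PA}}$. Since $h$ is affine with slope $-\big[(1-\epsilon_{\mathrm{p}})\delta_{\mathrm{b}}-\epsilon_{\mathrm{p}}\delta_{\mathrm{r}}\big]$, the comparison needs the condition $(1-\epsilon_{\mathrm{p}})\delta_{\mathrm{b}}\ge\epsilon_{\mathrm{p}}\delta_{\mathrm{r}}$ from \autoref{prop:react_success_rate}. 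I read this condition as implicit in ``the same assumptions'' and will state it explicitly. Under it, $h$ is nonincreasing. Also $h\ge 0$ on $[0,1]$, because $h$ is a probability. Since $\epsilon_{\mathrm{s,PA}}\le\epsilon_{\mathrm{s}}$, this gives $h(\epsilon_{\mathrm{s,PA}})\ge h(\epsilon_{\mathrm{s}})\ge0$, and raising to the power $T$ preserves the order, so $U_{\mathrm{PA}}\ge U_{\mathrm{ReAct}}$. If $\alpha=0$ or $p_{\mathrm{follow}}=0$, then $\epsilon_{\mathrm{s,PA}}=\epsilon_{\mathrm{s}}$ and equality holds.

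The main obstacle is Step 1. I must justify that the ``follow'' branch does not change the planning-error rate, and that the ReAct fallback keeps the same $\delta_{\mathrm{b}}$ and $\delta_{\mathrm{r}}$. This requires modeling the plan action and the ReAct planned action as having the same viability distribution. I will state this as part of ``the same assumptions'' rather than try to derive it.
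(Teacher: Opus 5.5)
Your proposal is correct and is essentially the paper's argument in a different order. The paper conditions on the follow event $F$ (probability $\alpha p_{\mathrm{follow}}$, where $a_t=\bar a_t$ and $\Pr(\mathcal V_t\mid F)=1-\epsilon_{\mathrm{p}}$) versus the ReAct fallback, then rearranges the total-probability sum to expose $\epsilon_{\mathrm{s,PA}}$; you instead compute $\epsilon_{\mathrm{s,PA}}$ first and substitute it into the ReAct formula. One labeling fix: the reference planned action should be $\bar a_t$ only on $F$, not on the whole aligned event, because in the aligned-but-not-following branch the $\epsilon_{\mathrm{s}}$ deviation is measured from $\tilde a_t$.

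Your explicit use of $(1-\epsilon_{\mathrm{p}})\delta_{\mathrm{b}}\ge\epsilon_{\mathrm{p}}\delta_{\mathrm{r}}$ in Step~3 is a genuine improvement. The paper's justification ``follows directly from $\epsilon_{\mathrm{s,PA}}\le\epsilon_{\mathrm{s}}$'' silently relies on that condition, and without it the one-step factor increases with the sampling error, so $U_{\mathrm{PA}}<U_{\mathrm{ReAct}}$ can occur.
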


\autoref{prop:pa_vs_react} suggests that the upper bound of success probability of PA is higher than that of ReAct due to reducing the sampling errors from $\epsilon_{\mathrm{s}}$ to $\epsilon_{\mathrm{s}_\mathrm{PA}}$.
However, note that the planning error $\epsilon_{\mathrm{p}}$ remains.

\subsection{Our Framework}
\label{sec:theoretical_analysis_existing_ours}

\ours aggregates $N$ sampled plans into a plan graph. This structure increases the diversity of action candidates at each step.
Let $v_t$ denote the planning node in the plan graph corresponding to the current state $s_t$, i.e., $v_t=f_\theta(s_t)$.
We define the set of candidate actions at $v_t$ as $\hat{\mathcal{A}}(v_t)\coloneqq \{\bar a\in\mathcal A \mid \exists\, (v_t\xrightarrow{\ \bar a\ }u_{t+1})\in E\}$, and let $d(v_t) \coloneqq |\hat{\mathcal{A}}(v_t)|$ be the number of distinct actions proposed by the aggregated plans.
Note that $d(v_t) \ge 1$ for any non-terminal node $v_t$.

\begin{proposition}
\label{prop:ours_vs_baselines}
Assume that a task requires $T$ steps, the external solver selects a viable action whenever one exists, and constrained decoding eliminates sampling error (i.e., $\epsilon_{\mathrm{s}} \approx 0$). Then, the upper bound of the success probability for \ours is given by $U_{\mathrm{ours}} \coloneqq \prod_{t=0}^{T-1} \left( 1 - (\epsilon_{\mathrm{p}})^{d(v_t)}\right)$. Consequently, we have
\begin{equation}
U_{\mathrm{ours}} \ge U_{\mathrm{PA}} \ge U_{\mathrm{ReAct}},
\end{equation}
where the equality between $U_{\mathrm{ours}}$ and $U_{\mathrm{PA}}$ holds if and only if $d(v_t)=1$ and $\epsilon_{\mathrm{s,PA}}\delta_{\mathrm{b}} = \epsilon_{\mathrm{s,PA}}\delta_{\mathrm{r}} = 0$.
\end{proposition}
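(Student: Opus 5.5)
The argument has three parts. First I derive the per-step success bound for \ours. Then I compare it with PA one factor at a time. Finally I settle the equality case.

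\textbf{Per-step bound for \ours.} I will mirror the step-wise decomposition behind \autoref{prop:react_success_rate}. The success event $\mathcal S$ needs $a_t\in\mathcal A_{\mathrm{viable}}(s_t)$ for every $t<T$, and the task requires $T$ steps. So $\Pr(\mathcal S)\le\prod_{t=0}^{T-1}\Pr(a_t\in\mathcal A_{\mathrm{viable}}(s_t)\mid \text{viable so far})$, with equality when the task ends exactly at $T$.

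At node $v_t$ the graph offers $d(v_t)$ distinct candidate actions. I model each one, as in \autoref{ass:planning_error}, as non-viable with probability $\epsilon_{\mathrm p}$, and I treat the candidates as independent across the aggregated plans. This independence is the modelling assumption I need to make explicit. Under it, the probability that no candidate is viable is $\epsilon_{\mathrm p}^{d(v_t)}$.

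The solver picks a viable action whenever one exists, and constrained decoding gives $a_t=\bar a^\star_t$ because $\epsilon_{\mathrm s}\approx 0$. Hence the per-step success probability is exactly $1-\epsilon_{\mathrm p}^{d(v_t)}$, and multiplying over $t$ gives $U_{\mathrm{ours}}$.

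\textbf{Comparison with PA.} I use \autoref{prop:pa_vs_react} for $U_{\mathrm{PA}}\ge U_{\mathrm{ReAct}}$, so only $U_{\mathrm{ours}}\ge U_{\mathrm{PA}}$ remains. I compare factor by factor.

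Since $d(v_t)\ge1$ and $\epsilon_{\mathrm p}\in[0,1]$, we have $1-\epsilon_{\mathrm p}^{d(v_t)}\ge 1-\epsilon_{\mathrm p}$. The PA factor satisfies
\[
(1-\epsilon_{\mathrm p})(1-\epsilon_{\mathrm{s,PA}}\delta_{\mathrm b})+\epsilon_{\mathrm p}\epsilon_{\mathrm{s,PA}}\delta_{\mathrm r}\le 1-\epsilon_{\mathrm p}.
\]
This needs $\epsilon_{\mathrm p}\epsilon_{\mathrm{s,PA}}\delta_{\mathrm r}\le(1-\epsilon_{\mathrm p})\epsilon_{\mathrm{s,PA}}\delta_{\mathrm b}$, which is the condition $(1-\epsilon_{\mathrm p})\delta_{\mathrm b}\ge\epsilon_{\mathrm p}\delta_{\mathrm r}$ from \autoref{prop:react_success_rate}. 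I will carry that condition over as a standing assumption, since without it the inequality can fail; for example, $\delta_{\mathrm b}=0$ with $\delta_{\mathrm r}>0$ breaks it. All factors lie in $[0,1]$, so the product of the factorwise inequalities gives $U_{\mathrm{ours}}\ge U_{\mathrm{PA}}$.

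\textbf{Equality case.} Equality requires every factor to match. The first inequality is tight iff $d(v_t)=1$, or $\epsilon_{\mathrm p}\in\{0,1\}$, which are degenerate and which I exclude by taking $0<\epsilon_{\mathrm p}<1$. The second is tight iff $(1-\epsilon_{\mathrm p})\epsilon_{\mathrm{s,PA}}\delta_{\mathrm b}=\epsilon_{\mathrm p}\epsilon_{\mathrm{s,PA}}\delta_{\mathrm r}$.

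The stated condition, $\epsilon_{\mathrm{s,PA}}\delta_{\mathrm b}=\epsilon_{\mathrm{s,PA}}\delta_{\mathrm r}=0$, implies this. The converse is the main obstacle: the two sides could balance at a nonzero value. I would handle it by reading the standing condition strictly, $(1-\epsilon_{\mathrm p})\delta_{\mathrm b}>\epsilon_{\mathrm p}\delta_{\mathrm r}$, whenever either quantity is positive. Under that reading, tightness forces $\epsilon_{\mathrm{s,PA}}\delta_{\mathrm b}=0$, and then $\epsilon_{\mathrm{s,PA}}\delta_{\mathrm r}=0$ follows from the balance equation. I will state this non-degeneracy explicitly.

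The independence modelling in the first part also deserves care, but the equality characterisation is where the statement is most fragile.
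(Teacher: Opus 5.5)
Your proposal follows the paper's proof essentially step for step: independence of the $d(v_t)$ candidates gives the per-step factor $1-(\epsilon_{\mathrm{p}})^{d(v_t)}$, and you place $1-\epsilon_{\mathrm{p}}$ between the TAPE and PA factors using the same condition $(1-\epsilon_{\mathrm{p}})\delta_{\mathrm{b}}\ge\epsilon_{\mathrm{p}}\delta_{\mathrm{r}}$ that the paper invokes. Your treatment of the equality case is more careful than the paper's, which only checks the ``if'' direction; you are right that the ``only if'' direction fails without the extra assumptions $0<\epsilon_{\mathrm{p}}<1$ and a strict form of that condition.
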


\autoref{prop:ours_vs_baselines} confirms that \ours theoretically guarantees the highest success probability among the compared frameworks by exponentially reducing planning errors from $\epsilon_{\mathrm{p}}$ to $(\epsilon_{\mathrm{p}})^{d(v_t)}$ via selecting a viable action from multiple candidates, while also eliminating sampling errors $\epsilon_{\mathrm{s}} \approx 0$ via constrained decoding.

\begin{figure*}[t]
    \centering
    \includegraphics[width=\linewidth]{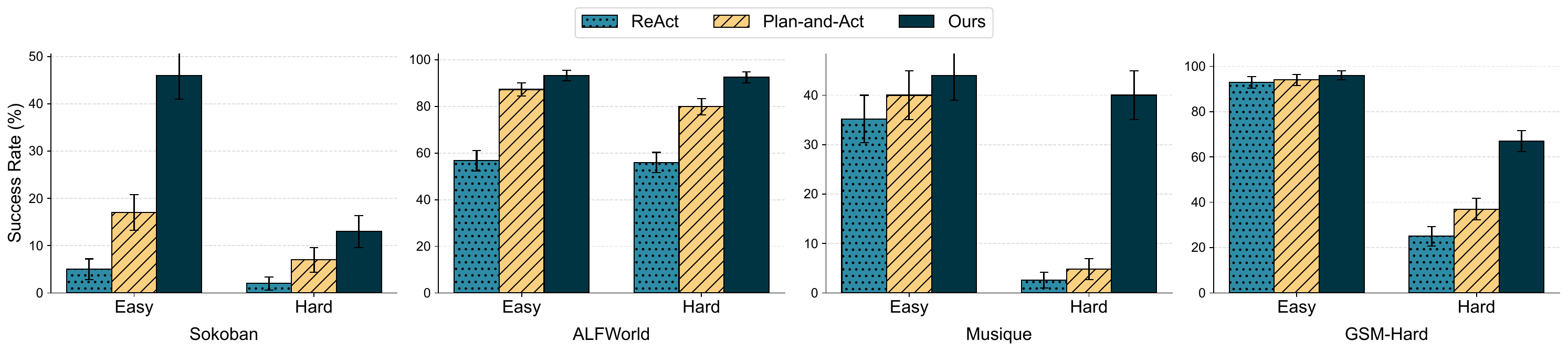}
\vspace{-15pt}
\caption{\textbf{Success rates across four agentic tasks.} We evaluate our framework against ReAct and Plan-and-Act on Sokoban, ALFWorld, Musique, and GSM-Hard. We use \texttt{gpt-4.1-mini} for LM backbone. We find that \ours consistently demonstrates superior performance over existing frameworks in both easy and hard settings.}
    \label{fig:main_result}
\vspace{-10pt}
\end{figure*}

\section{Empirical Analysis}
\label{sec:experiment}

We compare \ours against two representative agent frameworks: (i) \textbf{ReAct} \citep{ReAct} which interleaves reasoning and acting, and (ii) \textbf{Plan-and-Act (PA)} \citep{PlanAndAct} which executes a pre-generated plan. We note that various advanced prompting or reflection techniques can be orthogonally integrated into these frameworks; in this paper, we adapt \citet{MPO} for the implementation. Therefore, we focus on evaluating the fundamental structural differences between the frameworks.

For the evaluation, we consider four benchmarks characterized by frequent irrecoverable states: Sokoban, ALFWorld, GSM8K-Hard, and MuSiQue.
\textbf{Sokoban}~\citep{Sokoban} is a classic planning puzzle requiring the agent to push boxes to target locations without creating deadlocks.
\textbf{ALFWorld}~\citep{ALFWorld} involves embodied decision-making in a simulated household environment.
We also include \textbf{MuSiQue}~\citep{MuSiQue} for multi-hop factual reasoning using retrieval tools and \textbf{GSM8K-Hard}~\citep{PAL} for mathematical reasoning using arithmetic tools.
We report the success rate for each task. For more details about the experimental setting, see \autoref{app:experimental_details}.

\subsection{Overall Results}
The overall results on agentic benchmarks using \texttt{gpt-4.1-mini}~\citep{openai2025gpt41} are summarized in \autoref{fig:main_result}.
We compare \ours with ReAct and PA across four benchmarks characterized by frequent irrecoverable states, and find that \ours consistently outperforms the baselines.
In particular, \ours improves over ReAct, suggesting that our method mitigates both planning and sampling errors, whereas ReAct suffers from their accumulation.
Additionally, PA frequently outperforms ReAct, which may indicate that providing an explicit plan as in-context guidance helps reduce sampling errors.
Furthermore, \ours consistently surpasses both ReAct and PA.
As shown in \autoref{app:additional_experiment}, we additionally compare against Best-of-$N$~\citep{BestOfN, AgentDistill} of ReAct and PA, which sample the same number of plans at each step as \ours, and observe that \ours still consistently outperforms them.
Overall, these results suggest that \textbf{\ours successfully minimizes the planning and sampling errors inherent in existing frameworks}.
Notably, we observe substantial performance gains even in scenarios where existing frameworks achieve near-zero success rates, indicating that \ours can elevate the agent's effective planning capability by enabling the discovery of viable paths in otherwise unsolved instances.

\subsection{Analysis}

\begin{table}[t]
\centering
\caption{\textbf{Planning and sampling error analysis on Sokoban.} We estimate planning and sampling error rates across all states and find that both errors occur in existing frameworks. \ours substantially reduces both errors, showing an improvement in success rate. Best is in \textbf{bold} and second-best is \underline{underlined}.}
\vspace{-5pt}

\label{tab:planning_sampling_errors}
\resizebox{\columnwidth}{!}{
\begin{tabular}{lccc}
\toprule
\textbf{Framework} & \textbf{Planning Error (\%) $\downarrow$} & \textbf{Sampling Error (\%) $\downarrow$} & \textbf{Success Rate (\%) $\uparrow$} \\
\midrule
ReAct  & $50.7 \pm 1.8$ & $8.3 \pm 1.0$ & $5.0 \pm 2.2$ \\
Plan-and-Act     & $\underline{47.7 \pm 1.8}$ & $\underline{4.7 \pm 0.8}$ & $\underline{17.0 \pm 3.8}$ \\
\ours  & $\mathbf{36.7 \pm 1.9}$ & $\mathbf{0.0 \pm 0.0}$ & $\mathbf{46.0 \pm 5.0}$ \\
\bottomrule
\end{tabular}
}
\vspace{-5pt}
\end{table}

\paragraph{Planning \& Sampling Errors.}
We compare planning and sampling errors across agent frameworks, as summarized in \autoref{tab:planning_sampling_errors}.
We use Sokoban for this analysis because it admits an oracle shortest-path planner, which enables us to directly estimate the errors. 
Overall, lower planning and sampling error rates are associated with higher success rates.
This result highlights that \textbf{two sources of errors in \autoref{subsec:error_sources} exist in practice and directly affect task success}.
When comparing ReAct and PA, PA reduces the sampling error by roughly 43.4\% and slightly reduces the planning error by 3.0\%, which is accompanied by a gain in the success rate.
This aligns with our claim in \autoref{prop:pa_vs_react} that providing an in-context plan helps mitigate sampling errors while it does not effectively mitigate planning errors.
Most importantly, \ours significantly reduces planning errors while nearly eliminating sampling errors, leading to the largest improvement in the success rate.
These results empirically support our analysis in \autoref{prop:ours_vs_baselines} that \textbf{plan selection with a solver in a plan graph reduces planning errors and constrained execution suppresses sampling errors}.
The implementation details of our error estimators in Sokoban are provided in \autoref{app:error_estimation}.

\paragraph{Task Difficulty.}
We analyze how the performance of each framework varies with task difficulty as shown in \autoref{fig:main_result}. 
As the difficulty increases, all frameworks exhibit a performance decline.
Specifically, the success rates of ReAct and PA drop by an average of 55.4\% and 52.4\%, respectively.
In contrast, \ours consistently maintains higher success rates across all tasks, with an average decrease of 27.7\%.
This result indicates that \textbf{\ours effectively mitigates planning errors as tasks become hard and a mistaken action becomes more critical}.

\begin{figure}[t]
    \centering
    \resizebox{1.0\linewidth}{!}{
        \begin{subfigure}[b]{0.49\linewidth}
            \centering
            \includegraphics[width=\linewidth]{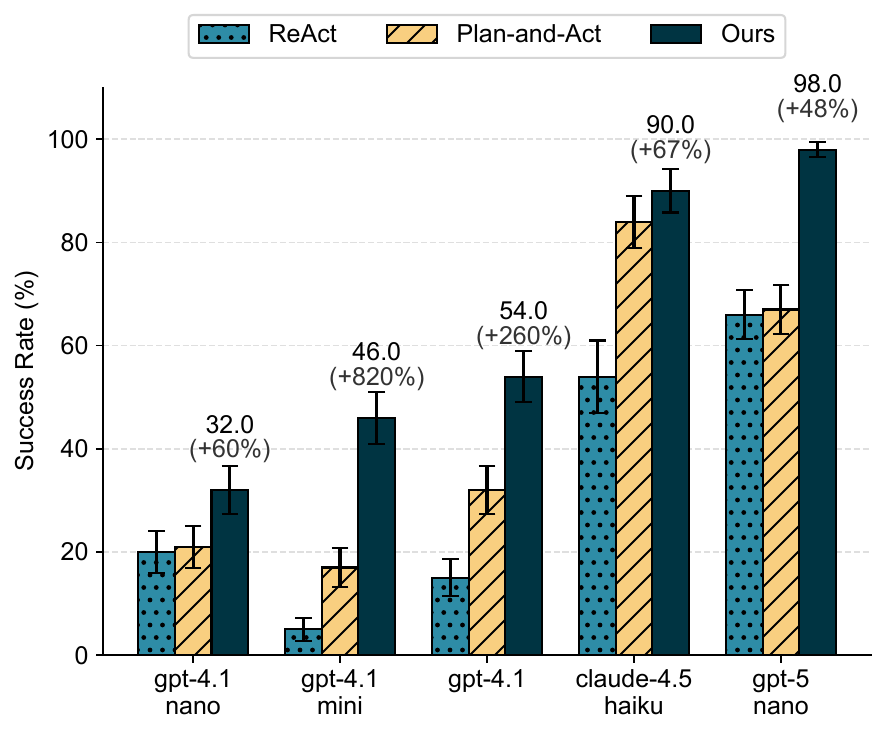}
            \caption{Success Rate across LMs}
            \label{fig:planning_capability_main}
        \end{subfigure}
        \hfill %
        \begin{subfigure}[b]{0.49\linewidth}
            \centering
            \includegraphics[width=\linewidth]{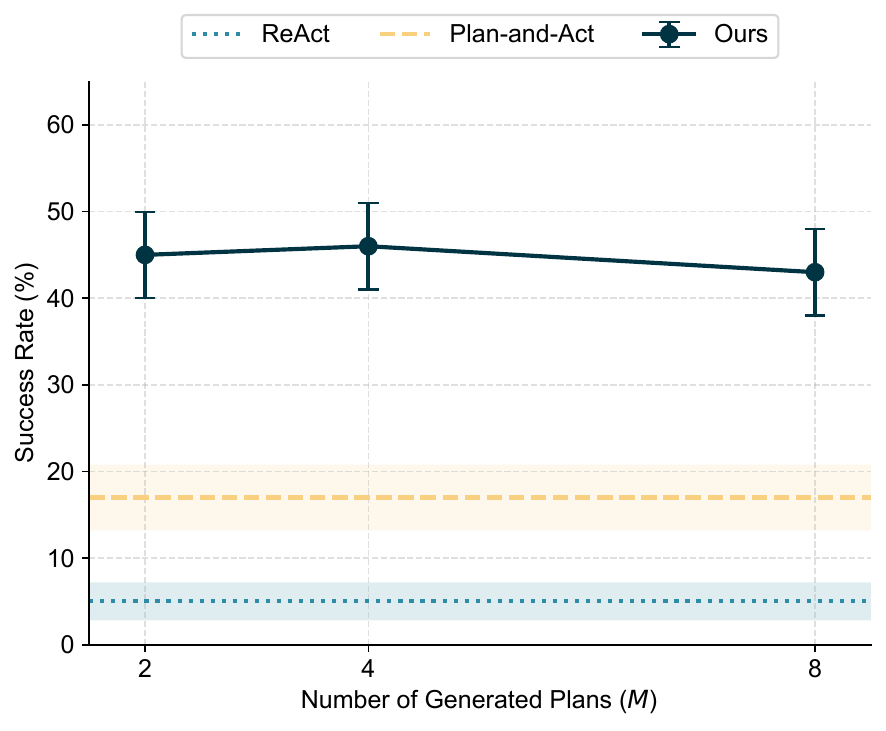}
            \caption{Sensitivity to $M$}
            \label{fig:sensitivity_analysis_main}
        \end{subfigure}
    }
    \caption{
\textbf{Cross-model success rates and sensitivity to the number of generated plans in Sokoban.}
\textbf{(a)} Success rates of \ours and baselines across LMs with different planning capabilities.
\ours consistently improves over other frameworks, with larger gains on weaker models, indicating effective mitigation of planning errors.
\textbf{(b)} Sensitivity of \ours to the number of generated plans $M$ used to construct the plan graph. The best performance is achieved at $M=4$, suggesting that moderate plan aggregation via node merging effectively expands the candidate action space.
}
    \label{fig:main_results}
\end{figure}

\paragraph{Impact of LM's Planning Ability.}
We evaluate \ours across several LMs with varying degrees of reasoning (or planning) capabilities (See \autoref{app:experimental_details} for LM backbones). 
In \autoref{fig:planning_capability_main}, the models are arranged in ascending order of their capability, as evidenced by the monotonic increase in the success rate of ReAct from 22.0\% to 66.0\%. As illustrated in \autoref{fig:planning_capability_main}, \ours consistently outperforms the baselines across all models. 
Notably, our method demonstrates exceptional efficacy on less capable models with limited planning ability; for instance, on \texttt{gpt-4.1-mini}, \ours achieves a relative improvement over ReAct. Even for highly capable models like \texttt{gpt-5-nano}, which already exhibit strong baseline performance, our method yields a significant gain, relatively improving +48\% over ReAct.
This suggests that \textbf{\ours effectively mitigates planning errors due to models with lower reasoning ability}, while eliminating remaining errors in stronger models.

\paragraph{Sensitivity to $M$.} We investigate the impact of the number of generated plans ($M$) used for graph construction on the success rate. As shown in \autoref{fig:sensitivity_analysis_main}, we find that $M=4$ yields the optimal performance. The performance gain observed when increasing $M$ from 2 to 4 is attributed to the aggregation of action candidates; as nodes merge, the number of outgoing edges increases. This reduces the planning errors for each node, supporting \autoref{prop:ours_vs_baselines}. However, we observe a performance decline at $M=8$. This degradation may stem from the reliance on LMs for graph construction. As the number of paths increases, the LLM struggles to maintain global consistency, leading to reduced graph completeness and accumulated construction errors.

\paragraph{Ablation Study.}
We study whether each component of our framework is necessary by selectively removing (i) the \textit{External Solver}, (ii) \textit{Constrained Execution}, and (iii) \textit{Replanning}.
Removing the external solver replaces the formal optimization step that selects a constraint-feasible path on the plan graph (e.g., via an ILP solver) with direct LLM-based selection.
Removing constrained execution means that we only provide the planned path as a prompt-level hint and let the LLM freely generate actions without constrained decoding.
Removing replanning disables mismatch checking and forces the agent to continue executing the planning-time path without adaptation.

As shown in \autoref{tab:sokoban_ablation_easy}, each component of our framework contributes to performance, and the best result is achieved when all three are enabled. Removing the External Solver reduces success from $46.0$ to $42.0$, indicating that formal optimization improves plan quality beyond the LLM’s planning ability. Disabling Constrained Execution causes a larger drop from $46.0$ to $36.0$, showing that prompt-level hints alone do not sufficiently prevent stochastic action deviations. Similarly, removing replanning lowers success to $38.0$, suggesting that mismatch checking and adaptation are important for recovering from execution-time errors and avoiding dead-end trajectories. Overall, the ablation confirms that \textbf{the three components are complementary, addressing planning errors, sampling errors, and plan-execution mismatches}, respectively.
\begin{table}[t]
\centering
\footnotesize
\caption{\textbf{Ablation study.}
We compare \ours against variants that remove components from: (i) \textit{External Solver}, (ii) \textit{Constraint Execution}, and (iii) \textit{Replanning} in Sokoban.
A checkmark (\cmark) indicates the component is enabled; a blank (-) indicates it is removed. Result shows that all components are crucial to improve the success rate. Best is in \textbf{bold} and second-best is \underline{underlined}.}
\vspace{-5pt}

\label{tab:sokoban_ablation_easy}
\resizebox{\columnwidth}{!}{
\begin{tabular}{c c c c}
\toprule
\textbf{External Solver} & \textbf{Constrained Execution} & \textbf{Replanning} & \textbf{Success Rate} \\
\midrule
{\cmark} & {\cmark} & {\cmark} & $\mathbf{46.0 \pm 5.0}$ \\
\midrule
- & {\cmark} & {\cmark} & $\underline{42.0 \pm 4.9}$ \\
{\cmark} & - & {\cmark} & $36.0 \pm 4.8$ \\
{\cmark} & {\cmark} & - & $38.0 \pm 4.8$ \\
\midrule
{\cmark} & - & - & $30.0 \pm 4.9$ \\
- & {\cmark} & - & $19.0 \pm 3.9$ \\
- & - & {\cmark} & $37.0 \pm 4.8$ \\
- & - & - & $11.0 \pm 3.1$ \\
\bottomrule
\end{tabular}
}
\vspace{-15pt}
\end{table}

\section{Conclusion} \label{sec:conclusion}

In this paper, we proposed \textbf{\ours}, a framework designed to mitigate planning and sampling errors in LM agents. By aggregating multiple plans into a graph and employing Integer Linear Programming, \ours identifies feasible paths, thereby reduces the planning error. Furthermore, \ours utilizes constrained execution to substantially reduce sampling error and performs adaptive replanning to handle environment and LM's knowledge mismatches. Our experiments demonstrate that \ours significantly outperforms the ReAct framework and the Plan-and-Act framework, particularly in complex tasks and for models with limited planning capabilities.

\paragraph{Limitations \& Future Work.}
Despite \ours's effectiveness, our framework presents several challenges for future research. First, the accuracy of the plan graph remains dependent on the LM’s ability to correctly structure and merge states. Inaccurate graph construction can lead to a plan space that does not faithfully represent the true environment, suggesting a need for more advanced methods to ensure the structural integrity of the plan graph. 
Second, \ours currently relies on a pre-specified solver, which may limit its generality across tasks with different optimization formulations.
Automatic solver selection based on the task objective is a promising direction for improving generalization.

\section*{Acknowledgements}

This work was supported by the National Science Foundation (NSF) Award DMS-2023239, the NSF CAREER Award CCF-2339978, Amazon Research Award, a grant from FuriosaAI, and Google Cloud Research Credits Program.
We appreciate Jaden Park from University of Wisconsin--Madison,
Chungpa Lee from Yonsei University,
and Minki Kang and Moonseok Choi from KAIST for their valuable discussions.

\section*{Impact Statement}

This work focuses on advancing machine learning by improving the reliability of language model agents that operate under feasibility constraints such as time and cost budgets and tool-usage limits, which may enable more dependable and cost-aware automation and assistance in benign applications. However, increasing agent reliability and effectiveness can lower the barrier to misuse in harmful or unauthorized settings. In addition, our approach may increase inference-time computation due to plan generation, plan-graph construction, which can raise monetary cost and energy use. We encourage responsible deployment with safeguards such as access control, monitoring and logging, and rate limits, and we encourage practitioners to consider cost and efficiency when adopting the method.

\bibliography{icml2026}
\bibliographystyle{icml2026}

\newpage
\appendix
\onecolumn
\section{Conceptual Toy Analysis Details}
\label{app:concept_proof_details}

To illustrate the distinction between planning and sampling errors, we implement the simple ReAct framework in a Sokoban environment, as described in \autoref{sec:problem_formulation}.
At each step, the policy first executes a Breadth-First Search (BFS) algorithm to obtain a shortest plan from the current state and takes its first action as the default intended action.
We then inject planning error with probability $\epsilon_{\mathrm{p}}$ by replacing the intended action with an alternative available action that is \emph{non-viable} under the remaining budget (if any), or otherwise with a random alternative action.
Finally, we inject sampling error with probability $\epsilon_{\mathrm{s}}$ by stochastically flipping the executed action to a different available action, while keeping the intended action unchanged.
Thus, $\epsilon_{\mathrm{p}}$ controls errors in action selection at planning time, whereas $\epsilon_{\mathrm{s}}$ controls stochastic deviations at execution time.
In our Sokoban setting, we set the action budget to the optimal solution length plus a slack of $2$ steps.
In \autoref{fig:motivation}, we set $\epsilon_{\mathrm{p}}=0.25$ (planning error) and $\epsilon_{\mathrm{s}}=0.20$ (sampling error).

\section{Related Work} \label{app:related_work}

\paragraph{Agentic Tasks.} %
Agentic tasks refer to problems in which an agent interacts with the external environment to accomplish the goal~\cite{ReAct}. Recent advancements have demonstrated the possibility of LMs' agentic ability across diverse domains, including robotic  manipulation~\citep{RobotLLMSurvey},
graphical user interface (GUI) navigation~\citep{Webvoyager},
and software engineering~\citep{SWEBench, Sweagent}.
Specifically,
in life simulations,
agents typically role-play as Non-Player Characters (NPCs), exhibiting human-like behaviors~\citep{Smallville, InfectedSmallVille}.

However, the impressive capabilities of these agents often rely on unrestricted resource consumption, which poses a significant barrier to practical deployment.
For instance, \citet{Smallville} shows that a two-day simulation of 25 agents incurred thousands of dollars in API costs.
Similarly, subsequent studies noted that the latency induced by such complex reasoning loops renders real-time interaction infeasible~\citep{Humanoidagents, Agentsims, Agentsociety}.
These examples highlight that in real-world scenarios, agents cannot be deployed with unbounded resources.

Instead, agents must operate under constraints, such as monetary budgets, latency limits, or safety protocols~\citep{BudgetTool}.
Operating under these constraints fundamentally changes the problem: the agent can no longer rely on exhaustive trial-and-error, and small planning or sampling errors can compound into constraint violations and irrecoverable states~\citep{LLMPlanningCapability, ETAL}.
Our work focuses on improving the success rate of LM agents in such constrained settings by mitigating such error compounding.

\paragraph{Language Model Agents.}
LM agents solve complex tasks by interacting with external environments, such as computers and databases, via tools~\citep{ReAct, ToolFormer}. The ReAct framework~\citep{ReAct} established a baseline by interleaving reasoning and acting, allowing agents to respond to immediate observations. However, relying solely on step-by-step generation often makes agents susceptible to planning errors~\citep{LLMPlanningCapability} and sampling errors~\citep{ETAL}.

To reduce both errors, some works have been proposed. To reduce planning errors, some works incorporate trained world knowledge models for planning~\citep{WKM, DGAP}, search over multiple candidate thoughts~\citep{ToT}, perform deliberation over candidate action sequences using imagined rollouts or heuristic search~\citep{RAP, RAFA, ToolChain, ThoughtOfSearch}, or use reflection mechanisms to identify failures and revise subsequent behavior~\citep{Reflexion, SAND, ReflAct}.
To mitigate sampling errors, Plan-and-Solve strategies generate a high-level plan before execution to improve global coherence and reduce logical inconsistencies during execution~\citep{PlanAndSolve, PlanAndAct, ReWoo, MPO}.
Some works address both errors by combining planning before execution with online plan refinement during execution~\citep{AdaPlanner, ADaPT, WebPilot}.
Despite these improvements, these methods typically do not enforce hard feasibility constraints during plan selection, and execution can still suffer from sampling errors even when the plan is feasible, since action generation remains stochastic and is not constrained to follow the plan.

In contrast, our framework enforces hard feasibility at plan selection by using a formal solver to select a constraint-feasible path over a plan graph constructed from multiple candidate plans, thereby reducing the planning errors.
It further reduces sampling errors by constraining decoding to the selected action, with replanning when observations mismatch the plan.

\paragraph{Neural-Symbolic Approaches.}
Neural-symbolic approaches integrate external solvers into learning-based models to incorporate symbolic structure and formal constraints~\citep{NeuralSymbolic}. 
For LLMs, LMs usually act as a translator that converts natural-language task descriptions into formal planning inputs for external solvers, such as the Planning Domain Definition Language (PDDL)~\citep{LLM+P, DynamicPlanningLLM, PDDLPlanning, CaStL, PlanningInTheDark, ThoughtOfSearch}, Satisfiability Modulo Theories (SMT)~\citep{ETAL, PlanningRigor}, or Linear Programming (LP)~\citep{LiP-LLM}.
Most prior neural-symbolic LM planning frameworks use LMs primarily as translators that map natural language into a single formal specification or solver-consistent plan, and treat execution as a downstream procedure.

In contrast, our framework keeps a diverse set of candidate plans, folds them into a plan graph, and solves a constraint-feasible path selection problem over this graph.
We further integrate planning and execution by constraining decoding to the selected action and replanning upon plan-observation mismatches, which is critical under strict feasibility constraints where deviations are irrecoverable.

\section{ILP Formulation for Budget Constrained Setting}\label{app:constrained_ilp}

In the constrained G-MDP setting, the agent aims to maximize the expected reward while strictly adhering to a specified budget (e.g., token limit or search depth). We extend the path selection formulation presented in \autoref{sec:path_selection} by incorporating the predicted edge costs.
Recall that for each edge $e \in E$, our model estimates a cost $\hat{c}_\theta(e)$, representing the expected resource consumption of traversing that edge. Let $\mathbf{ b}_t$ denote the remaining budget vector at the current step. We formulate the constrained path selection problem by adding a linear budget constraint to the original ILP:

\begin{align}
\max_{x}\quad& \sum_{\ell=0}^{L_{\max}-1}\ \sum_{e\in E} \hat r_\theta\big(\mathrm{tgt}(e)\big),x_{e,\ell} \label{eq:app_obj} \\
\text{s.t.} & \sum_{\ell=0}^{L_{\max}-1} \sum_{e\in E} \hat{\mathbf{c}}_\theta(e),x_{e,\ell} \preceq \mathbf{b}_t, \label{eq:app_budget} \\
&\text{Constraints \eqautoref{eq:single_action} to \eqautoref{eq:binary_var}}. \notag
\end{align}

Here, \eqautoref{eq:app_budget} enforces the \textit{budget constraint}, ensuring that the cumulative predicted cost of the selected walk does not exceed $\mathbf b_t$. The remaining structural constraints (\eqautoref{eq:single_action} to \eqautoref{eq:binary_var}) are identical to those in the unconstrained formulation, guaranteeing that the solution forms a valid directed walk from the start node $v_0$ to a goal node in $V_g$.If the solver finds no solution satisfying \eqautoref{eq:app_budget} (i.e., all valid paths to the goal exceed the budget), it returns an infeasibility status, which can be handled by a fallback policy or by re-planning with relaxed constraints.

\section{Proofs of Theoretical Analysis}\label{app:proofs}

In this section, we provide detailed proofs for the propositions presented in \autoref{sec:theoretical_analysis}.
We define the event of selecting a viable action at step $t$ as $\mathcal{V}_t := \{a_t \in \mathcal{A}_{\mathrm{viable}}(s_t)\}$.
According to \autoref{def:success_event}, the success probability for a task requiring $T_g$ steps is given by $\Pr(\mathcal{S}) = \Pr(\cap_{t=0}^{T_g-1} \mathcal{V}_t)$.
Assuming the Markov property and independence of errors at each step, we focus on deriving the single-step success probability $P(\mathcal{V}_t)$.

\subsection{Proof of \autoref{prop:react_success_rate}}
\begin{proof}
Let $\hat{a}_t$ be the action planned by the agent (intent), and $a_t$ be the action actually executed.
Also, let $E_{\mathrm{plan}}$ be the event that the planned action is viable, i.e., $\hat{a}_t \in \mathcal{A}_{\mathrm{viable}}(s_t)$, and let $E_{\mathrm{exec}}$ be the event that the executed action matches the plan, i.e., $a_t = \hat{a}_t$.
We define two independent events based on the error sources as follows: $\Pr(E_{\mathrm{plan}}) = 1 - \epsilon_{\mathrm{p}}$ and $\Pr(E_{\mathrm{plan}}^c) = \epsilon_{\mathrm{p}}$, and $\Pr(E_{\mathrm{exec}}^c) = \epsilon_{\mathrm{s}}$.

The viable action event $\mathcal{V}_t$ can occur in two disjoint cases.
First, when the plan is viable, the executed action $a_t$ remains viable if execution succeeds ($E_{\mathrm{exec}}$) or if execution fails but does not break viability. The latter occurs with probability $1-\delta_{\mathrm{b}}$.
\begin{equation}
\Pr(\mathcal{V}_t \mid E_{\mathrm{plan}}) = (1-\epsilon_{\mathrm{s}}) \cdot 1 + \epsilon_{\mathrm{s}} \cdot (1-\delta_{\mathrm{b}}) = 1 - \epsilon_{\mathrm{s}}\delta_{\mathrm{b}}.
\end{equation}
Second, when the plan is non-viable, the executed action $a_t$ becomes viable only if execution fails ($E_{\mathrm{exec}}^c$) and accidentally recovers viability (with probability $\delta_{\mathrm{r}}$).
\begin{equation}
    \Pr(\mathcal{V}_t \mid E_{\mathrm{plan}}^c) = (1-\epsilon_{\mathrm{s}}) \cdot 0 + \epsilon_{\mathrm{s}} \cdot \delta_{\mathrm{r}} = \epsilon_{\mathrm{s}}\delta_{\mathrm{r}}.
\end{equation}

By the Law of Total Probability, the probability of selecting a viable action at step $t$ is:
\begin{align}
\Pr(\mathcal{V}_t) &= \Pr(\mathcal{V}_t \mid E_{\mathrm{plan}})\Pr(E_{\mathrm{plan}}) + \Pr(\mathcal{V}_t \mid E_{\mathrm{plan}}^c)\Pr(E_{\mathrm{plan}}^c) \\
&= (1-\epsilon_{\mathrm{s}}\delta_{\mathrm{b}})(1-\epsilon_{\mathrm{p}}) + (\epsilon_{\mathrm{s}}\delta_{\mathrm{r}})\epsilon_{\mathrm{p}} \\
&= (1-\epsilon_{\mathrm{p}})(1-\epsilon_{\mathrm{s}}\delta_{\mathrm{b}}) + \epsilon_{\mathrm{p}}\epsilon_{\mathrm{s}}\delta_{\mathrm{r}}.
\end{align}
Since the task requires at least $T \le T_g$ steps, the overall success probability is the product of single-step probabilities up to $T$:
\begin{align}
\Pr(\mathcal{S}) &\le \prod_{t=0}^{T-1} \Pr(\mathcal{V}_t) \\
&= \Big((1-\epsilon_{\mathrm{p}})(1-\epsilon_{\mathrm{s}}\delta_{\mathrm{b}}) + \epsilon_{\mathrm{p}}\epsilon_{\mathrm{s}}\delta_{\mathrm{r}}\Big)^{T} \\
&=: U_{\mathrm{ReAct}}.
\end{align}
Equality holds if the task ends exactly at step $T$, i.e. $T=T_g$.

To show monotonicity, let $f(\epsilon_{\mathrm{s}}) = U_{\mathrm{ReAct}}$.
We take the derivative with respect to $\epsilon_{\mathrm{s}}$
\begin{equation}
\frac{\partial}{\partial \epsilon_{\mathrm{s}}} f(\epsilon_{\mathrm{s}}) = \left( -(1-\epsilon_{\mathrm{p}})\delta_{\mathrm{b}} + \epsilon_{\mathrm{p}}\delta_{\mathrm{r}}\right)^T.
\end{equation}

For the success probability to increase as $\epsilon_{\mathrm{s}}$ decreases (i.e., derivative is negative), we require

\begin{equation}
-(1-\epsilon_{\mathrm{p}})\delta_{\mathrm{b}} + \epsilon_{\mathrm{p}}\delta_{\mathrm{r}} \le 0 \iff (1-\epsilon_{\mathrm{p}})\delta_{\mathrm{b}} \ge \epsilon_{\mathrm{p}}\delta_{\mathrm{r}}.
\end{equation}

Similarly, we can show that the upper bound of success probability increases as $\epsilon_{\mathrm{p}}$ decreases. This confirms the condition stated in \autoref{prop:react_success_rate}.
\end{proof}

\subsection{Proof of \autoref{prop:pa_vs_react}}

\begin{proof}
According to the Plan-and-Act mechanism defined in \eqautoref{eq:plan_and_act}, the agent behavior is divided into two cases based on plan alignment and following probability.

First, following the plan occurs when the plan is aligned ($z(s_t)=\hat s_t^{\hat \tau}$) and the agent chooses to follow it. We denote this event as $F$. The probability of this event is $\Pr(F) = \alpha p_{\mathrm{follow}}$.
In this case, $a_t = \bar{a}_t$.
Since $\bar{a}_t$ is deterministically selected from the pre-generated plan, the execution sampling error is eliminated ($\epsilon_{\mathrm{s}} \to 0$). However, the pre-generated plan itself is subject to the same planning error $\epsilon_{\mathrm{p}}$ as the ReAct framework.
Thus, the success probability for this case is derived by setting $\epsilon_{\mathrm{s}}=0$ in the ReAct single-step probability as
\begin{equation}
\Pr(\mathcal{V}_t \mid F) = (1-\epsilon_{\mathrm{p}})(1-0) + \epsilon_{\mathrm{p}}(0) = 1-\epsilon_{\mathrm{p}}.
\end{equation}

Second, fallback to ReAct occurs when the plan is not aligned or the agent chooses not to follow. We denote this event as $R$. The probability is $\Pr(R) = 1 - \alpha p_{\mathrm{follow}}$.
We denote this event as $R$.
In this case, $a_t = a_t^{\mathrm{ReAct}}$, and the success probability is identical to that of the ReAct framework derived in \autoref{prop:react_success_rate} as
\begin{equation}
    \Pr(\mathcal{V}_t \mid R) = (1-\epsilon_{\mathrm{p}})(1-\epsilon_{\mathrm{s}}\delta_{\mathrm{b}}) + \epsilon_{\mathrm{p}}\epsilon_{\mathrm{s}}\delta_{\mathrm{r}}.
\end{equation}

By the Law of Total Probability, the single-step success probability for PA is
\begin{align}
\Pr(\mathcal{V}_t) &= \Pr(F)\Pr(\mathcal{V}_t \mid F) + \Pr(R)\Pr(\mathcal{V}_t \mid R) \\
&= (\alpha p_{\mathrm{follow}})(1-\epsilon_{\mathrm{p}}) + (1-\alpha p_{\mathrm{follow}})\Big[ (1-\epsilon_{\mathrm{p}})(1-\epsilon_{\mathrm{s}}\delta_{\mathrm{b}}) + \epsilon_{\mathrm{p}}\epsilon_{\mathrm{s}}\delta_{\mathrm{r}} \Big].
\end{align}
We can rearrange the terms to isolate $\epsilon_{\mathrm{s}}$:
\begin{align}
\Pr(\mathcal{V}_t) &= (\alpha p_{\mathrm{follow}})(1-\epsilon_{\mathrm{p}}) + (1-\alpha p_{\mathrm{follow}})(1-\epsilon_{\mathrm{p}}) + (1-\alpha p_{\mathrm{follow}})\left[ -(1-\epsilon_{\mathrm{p}})\epsilon_{\mathrm{s}}\delta_{\mathrm{b}} + \epsilon_{\mathrm{p}}\epsilon_{\mathrm{s}}\delta_{\mathrm{r}} \right] \\
&= (1-\epsilon_{\mathrm{p}}) - (1-\epsilon_{\mathrm{p}})(1-\alpha p_{\mathrm{follow}})\epsilon_{\mathrm{s}}\delta_{\mathrm{b}} + \epsilon_{\mathrm{p}}(1-\alpha p_{\mathrm{follow}})\epsilon_{\mathrm{s}}\delta_{\mathrm{r}}. \label{eq:prob_w_o_sampling_error_pa}
\end{align}

Now, we define the effective sampling error for PA as $\epsilon_{\mathrm{s,PA}} \coloneqq (1-\alpha p_{\mathrm{follow}})\epsilon_{\mathrm{s}}$. Substituting this into \eqautoref{eq:prob_w_o_sampling_error_pa}, we have
\begin{equation}
\Pr(\mathcal{V}_t) = (1-\epsilon_{\mathrm{p}})(1 - \epsilon_{\mathrm{s,PA}}\delta_{\mathrm{b}}) + \epsilon_{\mathrm{p}}\epsilon_{\mathrm{s,PA}}\delta_{\mathrm{r}}.
\end{equation}

Finally, since the task requires $T$ steps, we take the product over $t=0$ to $T-1$, we have
\begin{equation}
U_{\mathrm{PA}} = \Big((1-\epsilon_{\mathrm{p}})(1-\epsilon_{\mathrm{s,PA}}\delta_{\mathrm{b}}) + \epsilon_{\mathrm{p}}\epsilon_{\mathrm{s,PA}}\delta_{\mathrm{r}}\Big)^{T}.
\end{equation}
The comparison $U_{\mathrm{PA}} \ge U_{\mathrm{ReAct}}$ follows directly from $\epsilon_{\mathrm{s,PA}} \le \epsilon_{\mathrm{s}}$ and equality holds when $\alpha p_{\mathrm{follow}}=0$.
\end{proof}

\subsection{Proof of \autoref{prop:ours_vs_baselines}}

\begin{proof}

Let $\hat{\mathcal{A}}(v_t)$ be the set of $d(v_t)$ candidate actions generated at step $t$.
A planning failure occurs for the entire set only if \textit{all} candidates in $\hat{\mathcal{A}}(v_t)$ are non-viable. Assuming the generation of each candidate is independent given the state, the probability that all candidates fail is $(\epsilon_{\mathrm{p}})^{d(v_t)}$.
Consequently, the probability that there exists at least one viable action in the candidate set is:
\begin{equation}
    \Pr(\exists a \in \hat{\mathcal{A}}(v_t) : a \in \mathcal{A}_{\mathrm{viable}}) = 1 - (\epsilon_{\mathrm{p}})^{d(v_t)}.
\end{equation}
Based on the assumptions in \autoref{prop:ours_vs_baselines}, the external solver successfully identifies a viable action if one exists in the set and constrained execution eliminates sampling error (i.e., $\epsilon_{\mathrm{s}} \approx 0$), ensuring the selected action is executed exactly.
Thus, the single-step success probability for \ours is exactly $1 - (\epsilon_{\mathrm{p}})^{d(v_t)}$. Taking the product over $T$ steps yields
\begin{equation}
    U_{\mathrm{Ours}} = \prod_{t=0}^{T-1} \Big( 1 - (\epsilon_{\mathrm{p}})^{d(v_t)} \Big).
\end{equation}

From \autoref{prop:pa_vs_react}, we already established $U_{\mathrm{PA}} \ge U_{\mathrm{ReAct}}$. We now focus on proving $U_{\mathrm{Ours}} \ge U_{\mathrm{PA}}$.

Since we assume $(1-\epsilon_{\mathrm{p}})\delta_{\mathrm{b}} \ge \epsilon_{\mathrm{p}}\delta_{\mathrm{r}}$, the success probability of PA at step $t$ is maximized when the effective sampling error is zero ($\epsilon_{\mathrm{s,PA}}=0$).
Substituting $\epsilon_{\mathrm{s,PA}}=0$ into the term for PA gives the upper bound:
\begin{equation}
    (1-\epsilon_{\mathrm{p}})(1-\epsilon_{\mathrm{s,PA}}\delta_{\mathrm{b}}) + \epsilon_{\mathrm{p}}\epsilon_{\mathrm{s,PA}}\delta_{\mathrm{r}} \le 1 - \epsilon_{\mathrm{p}}. \label{eq:inequality_stepwise_1}
\end{equation}
Next, for \ours, since $d(v_t) \ge 1$ and $0 \le \epsilon_{\mathrm{p}} < 1$, it follows that $(\epsilon_{\mathrm{p}})^{d(v_t)} \le \epsilon_{\mathrm{p}}$.
Therefore, we have
\begin{equation}
    1 - (\epsilon_{\mathrm{p}})^{d(v_t)} \ge 1 - \epsilon_{\mathrm{p}}. \label{eq:inequality_stepwise_2}
\end{equation}
Combining \eqautoref{eq:inequality_stepwise_1} and \eqautoref{eq:inequality_stepwise_2}, we have
\begin{equation}
    1 - (\epsilon_{\mathrm{p}})^{d(v_t)} \ge 1 - \epsilon_{\mathrm{p}} \ge (1-\epsilon_{\mathrm{p}})(1-\epsilon_{\mathrm{s,PA}}\delta_{\mathrm{b}}) + \epsilon_{\mathrm{p}}\epsilon_{\mathrm{s,PA}}\delta_{\mathrm{r}}.
    \label{eq:inequality_stepwise}
\end{equation}
Since \eqautoref{eq:inequality_stepwise} satisfies for all steps $t=0\dots T-1$, we conclude $U_{\mathrm{Ours}} \ge U_{\mathrm{PA}}$. Also, we can see that $1 - (\epsilon_{\mathrm{p}})^{d(v_t)}=(1-\epsilon_{\mathrm{p}})(1-\epsilon_{\mathrm{s,PA}}\delta_{\mathrm{b}}) + \epsilon_{\mathrm{p}}\epsilon_{\mathrm{s,PA}}\delta_{\mathrm{r}}$ holds when $d(v_t)=1$ and $\epsilon_{\mathrm{s,PA}}\delta_{\mathrm{b}} = \epsilon_{\mathrm{s,PA}}\delta_{\mathrm{r}} = 0$.
\end{proof}

\newpage

\section{Experiment Details}
\subsection{Experimental Setup}\label{app:experimental_details}

\paragraph{Tasks and Datasets} Tasks and Datasest details are explained below. The detailed statistics are summarized in Table.
\begin{itemize}
    \item \textbf{Sokoban~\citep{Sokoban}.} is a task pushing all boxes onto goal tiles using four primitive actions (\texttt{U}, \texttt{D}, \texttt{L}, \texttt{R}). To make the task irrecoverable, we define success as solving the instance within two steps of the optimal solution length. We construct two difficulty by the controlling optimal step count $T^\star$: \emph{easy} instances have $T^\star = 6$, while \emph{hard} instances have $T^\star = 10$. For evaluation, we construct 10 maps for each task difficulty and run 10 times for each map.

    \item \textbf{ALFWorld~\citep{ALFWorld}.} This task is a synthetic text-based household embodied tasks where the agent executes environment-specific actions to complete a given goal. To align with our assumption that the agent knows the abstract world model in agentic tasks, we modify the environment to provide the object availability information for each location and the basic information of the dependency of the action. We define difficulty by the action budget relative to the optimal length $T^\star$: \emph{easy} instances use a looser budget, while \emph{hard} instances use a tighter budget. For evaluation, 

    \item \textbf{GSM-Hard~\citep{PAL}} is a mathemathical reasoning problem dataset that converts some numerical values larger so that the problem cannot be solve easily without arithmetic tools.
We cast as an agentic task by equipping the agent with arithmetic tools (\texttt{+}, \texttt{-}, \texttt{$\times$}, \texttt{/}).
For each operator, we provide two tool variants: a \emph{fast} tool with lower success probability and a \emph{slow} tool with higher success probability.
We define \emph{hard} tasks as those with a tight time budget that incentivizes using fast tools, and \emph{easy} tasks as those with a loose time budget.

\item \textbf{MuSiQue~\citep{MuSiQue}} is a factual multi-hop reasoning dataset, where the agent can query a retriever to obtain supporting information.
To create an agentic setting with explicit cost--quality tradeoffs, we construct five synthetic retrievers ranging from fast, cheap, but inaccurate to slow, expensive, but accurate.
Analogous to GSM-8K, we define \emph{hard} tasks as those requiring fast and cheap solving, and \emph{easy} tasks as those allowing a larger time budget and higher retrieval cost.
\end{itemize}

\paragraph{Language Model Backbones.} We use five LM backbones: \texttt{gpt-4.1-nano}, \texttt{gpt-4.1-mini}, \texttt{gpt-4.1}~\citep{openai2025gpt41}, \texttt{gpt-5-nano}~\citep{singh2025openai}, and \texttt{claude-4.5-haiku}~\citep{anthropic2025system_card}.

\paragraph{Inference} We set the sampling temperature to $0.3$ for inference in all experiments except \texttt{gpt-5-nano} models. For \texttt{gpt-5-nano}, as we do not change the temperature, we set the default value (it is unknown). Other parameters, such as top-$p$ and repetition penaltiy, are set as the default value (both are $1$).

\subsection{Baselines}
\paragraph{ReAct~\citep{ReAct}} ReAct is the framework that make LM agents solve the tasks by interleaving thought and actions to interact with the external environments. There are various types of implementation for the thoughts and acts~\citep{ReAct, Reflexion, ReflAct}, we adopt the prompting technique from \citet{MPO}. Detailed prompts are summarized in Prompt~\ref{box:prompt_react_sokoban}.

\paragraph{Plan-and-Act~\citep{PlanAndSolve, PlanAndAct}}
Plan-and-Act (PA) first generates the a plan and use it as in-context prompt. Then, LM agent refers the plan to interact with the environment. For inteaction part, the prompt is the same as ReAct. Detailed prompts for the plan generation phase are summarized in Prompt~\ref{box:prompt_pa_sokoban}.

\newpage
\begin{promptbox}[label={box:prompt_react_sokoban}]{ReAct (Sokoban)}
Interact with Sokoban environment to solve a task (placing every box onto goals.)

\vspace{1em}

\#\# Sokoban rules (task + mechanics)

- Task objective: place every box onto goals. The puzzle is solved when all boxes are on goals.

- Action mechanics: each action moves the player exactly one cell in the chosen direction (U/D/L/R).

    - U: move up, e.g., (x, y) -> (x, y + 1)
    
    - D: move down, e.g., (x, y) -> (x, y - 1)

    - L: move left, e.g., (x, y) -> (x - 1, y)

    - R: move right, e.g., (x, y) -> (x + 1, y)

    - The action name must match the coordinate change (U increases y, D decreases y, R increases x, L decreases x).

- Walls: the player cannot move into a wall cell. If a wall occupies the destination cell, the action is invalid and the player does not move.

- Boxes: if the destination cell has a box, the player attempts to push it one cell further in the same direction.

    - The push succeeds only if the cell behind the box is empty floor or a goal.

    - If the cell behind the box is a wall or another box, the push is invalid and nothing moves.

    - A push is only possible when the player is on the cell immediately adjacent to the box from the opposite side of the push direction.

- The player cannot pull boxes and cannot push two boxes at once.

- Some pushes can create deadlocks (for example, pushing a box into a corner where it cannot reach any goal).

Examples (coordinate outcomes, using U/D/L/R only):

- Empty move, R: player at (x, y), no wall/box at (x + 1, y). Action R -> player at (x + 1, y); box on goal location unchanged (goals are unaffected by moves).

- Empty move, L: player at (x, y), no wall/box at (x - 1, y). Action L -> player at (x - 1, y); box on goal location unchanged (goals are unaffected by moves).

- Empty move, U: player at (x, y), no wall/box at (x, y + 1). Action U -> player at (x, y + 1); box on goal location unchanged (goals are unaffected by moves).

- Empty move, D: player at (x, y), no wall/box at (x, y - 1). Action D -> player at (x, y - 1); box on goal location unchanged (goals are unaffected by moves).

- Wall block, U: player at (x, y), wall at (x, y + 1). Action U -> invalid, player stays at (x, y); box on goal location unchanged.

- Wall block, D: player at (x, y), wall at (x, y - 1). Action D -> invalid, player stays at (x, y); box on goal location unchanged.

- Wall block, L: player at (x, y), wall at (x - 1, y). Action L -> invalid, player stays at (x, y); box on goal location unchanged.

- Wall block, R: player at (x, y), wall at (x + 1, y). Action R -> invalid, player stays at (x, y); box on goal location unchanged.

- Push succeeds, R: player at (x, y), box at (x + 1, y), no wall/box at (x + 2, y). Action R -> player at (x + 1, y), box moves to (x + 2, y); if (x + 2, y) is a goal, box on goal location becomes (x + 2, y), otherwise unchanged.

- Push succeeds, L: player at (x, y), box at (x - 1, y), no wall/box at (x - 2, y). Action L -> player at (x - 1, y), box moves to (x - 2, y); if (x - 2, y) is a goal, box on goal location becomes (x - 2, y), otherwise unchanged.

- Push succeeds, U: player at (x, y), box at (x, y + 1), no wall/box at (x, y + 2). Action U -> player at (x, y + 1), box moves to (x, y + 2); if (x, y + 2) is a goal, box on goal location becomes (x, y + 2), otherwise unchanged.

- Push succeeds, D: player at (x, y), box at (x, y - 1), no wall/box at (x, y - 2). Action D -> player at (x, y - 1), box moves to (x, y - 2); if (x, y - 2) is a goal, box on goal location becomes (x, y - 2), otherwise unchanged.

- Push blocked by wall, R: player at (x, y), box at (x + 1, y), wall at (x + 2, y). Action R -> invalid, player stays at (x, y), box stays at (x + 1, y); box on goal location unchanged.

- Push blocked by wall, L: player at (x, y), box at (x - 1, y), wall at (x - 2, y). Action L -> invalid, player stays at (x, y), box stays at (x - 1, y); box on goal location unchanged.

- Push blocked by wall, U: player at (x, y), box at (x, y + 1), wall at (x, y + 2). Action U -> invalid, player stays at (x, y), box stays at (x, y + 1); box on goal location unchanged.

- Push blocked by wall, D: player at (x, y), box at (x, y - 1), wall at (x, y - 2). Action D -> invalid, player stays at (x, y), box stays at (x, y - 1); box on goal location unchanged.

- Push blocked by box, R: player at (x, y), box at (x + 1, y), another box at (x + 2, y). Action R -> invalid, player stays at (x, y), boxes stay at (x + 1, y) and (x + 2, y); box on goal location unchanged.

- Push blocked by box, L: player at (x, y), box at (x - 1, y), another box at (x - 2, y). Action L -> invalid, player stays at (x, y), boxes stay at (x - 1, y) and (x - 2, y); box on goal location unchanged.

- Push blocked by box, U: player at (x, y), box at (x, y + 1), another box at (x, y + 2). Action U -> invalid, player stays at (x, y), boxes stay at (x, y + 1) and (x, y + 2); box on goal location unchanged.

- Push blocked by box, D: player at (x, y), box at (x, y - 1), another box at (x, y - 2). Action D -> invalid, player stays at (x, y), boxes stay at (x, y - 1) and (x, y - 2); box on goal location unchanged.

\vspace{1em}

\#\# Instruction

You are the player in a Sokoban environment, and your goal is to place every box on a goal within a limited number of actions (within step remaining). That means you need to make the same number of boxes on goals by placing all boxes onto goals. If a box is on a goal, it is considered satisfied.

At each turn, you will receive the current observation.

Observation format (all coordinates are (x, y)):

- wall location: (x1, y1), (x2, y2), ...

- player location: (x, y)

- box location: (x3, y3), ...

- goal location: (x4, y4), ...

- box on goal location: (x5, y5), ...

- Step remaining: <steps\_remaining>

The "box location" list includes only boxes not on goals. The "goal location" list includes all goals.
You may choose between two outputs: "Thought" or "Action".
\vspace{1em}

When you choose "Thought", you must:

Plan the full solution (overall path) so that all boxes reach goals within the remaining steps, using the Sokoban rules (task + mechanics).

From that full plan, explicitly predict only the next 1 step: how the immediate action will change the player location, box location, and box on goal location.

Based on that planning and the current observation, decide the immediate next action to take.

IMPORTANT:

    - Your Thought MUST match the given observation exactly. No hallucination about positions or adjacency is allowed.
    
    - If a push is feasible immediately, you MUST choose the move that pushes.
    
    - If you planned a push in the previous Thought and the current observation still allows that push, you MUST continue and execute it.
    
    - Do NOT rewrite the plan from scratch unless the environment changed and the old plan became infeasible.

Your output must follow exactly:

Thought: <your reasoning>

Action: <U|D|L|R>"
\end{promptbox}

\begin{promptbox}[label={box:prompt_pa_sokoban}]{Plan-and-Act (Sokoban)}
Interact with Sokoban environment to solve a task (placing every box onto goals.)

\{\{sokoban\_rule\}\}
\vspace{1em}

\#\# Instruction
You are the player in a Sokoban environment, and your goal is to place every box on a goal within a limited number of actions (within step remaining). That means you need to make the same number of boxes on goals by placing all boxes onto goals. If a box is on a goal, it is considered satisfied.

At each turn, you will receive the current observation.
Observation format (all coordinates are (x, y)):
- wall location: (x1, y1), (x2, y2), ...
- player location: (x, y)
- box location: (x3, y3), ...
- goal location: (x4, y4), ...
- box on goal location: (x5, y5), ...
- Step remaining: <steps\_remaining>
The "box location" list includes only boxes not on goals. The "goal location" list includes all goals.

You are in planning mode. Using the Sokoban rules, plan the full solution (overall path) so that all boxes reach goals within the remaining steps, using the Sokoban rules (task + mechanics).
Based on this plan, you generate the full sequence of actions.
\end{promptbox}

\newpage

\section{Implementation Details of \ours}
In this section, we introduce the pseudo-code for \ours and prompt details for our implementation. Detailed implementation of our framework is available on \href{https://github.com/UW-Madison-Lee-Lab/TAPE}{Github}.

\subsection{Pseudo-code}
The overall procedure of \ours is detailed in \autoref{alg:ours}.

\begin{algorithm}[t]
\caption{Tool-Guided Adaptive Planning with Constrained Execution (\ours)}
\label{alg:ours}
\begin{algorithmic}[1]
\REQUIRE Environment $\mathcal{E}$, budgets $B$, max horizon $L_{\mathrm{max}}$, \#candidates $M$
\REQUIRE LLM-based abstract state projector $z_\theta(\cdot)$, LLM-based plan graph constructor $\textsc{BuildPlanGraph}(\cdot)$
\REQUIRE LLM predictors $\textsc{AnnotateGraph}(\cdot)$ for node reward / edge cost
\REQUIRE External solver $\textsc{Solver}(\cdot)$, constrained decoding method $\textsc{ConstrainedDecoding}(\cdot)$
\STATE Initialize history $s_0 \leftarrow (\,)$, observe $o_0 \leftarrow \mathcal{E}.\textsc{Reset}()$, $t \leftarrow 0$

\WHILE{$t < L_{\mathrm{max}}$ \AND not terminal}
    \STATE $\hat s_t \leftarrow z_\theta(s_t)$ \COMMENT{$\triangleright$ State projection}
    
    \STATE $\{\hat\tau^{(m)}\}_{m=1}^{M} \leftarrow \textsc{SamplePlans}(\hat s_t, M)$ \COMMENT{$\triangleright$ Candidate rollout sampling}
    
    \STATE $G \leftarrow \textsc{BuildPlanGraph}(\{\hat\tau^{(m)}\}, z_\theta)$ \COMMENT{$\triangleright$ Graph construction via abstract state merging}
    
    \STATE $G \leftarrow \textsc{AnnotateGraph}(G)$ \COMMENT{$\triangleright$ LLM-based reward/cost annotation}
    
    \STATE $\pi^\star \leftarrow \textsc{Solver}(G, \hat s_t, B, L-t)$ \COMMENT{$\triangleright$ Feasible plan path selection via ILP}
    
    \WHILE{$t < L$ \AND not terminal}
        \STATE $a_t^\star \leftarrow \pi^\star[t]$
        \STATE $a_t \leftarrow \textsc{ConstrainedDecoding}(s_t, a_t^\star)$ \COMMENT{$\triangleright$ Enforce sampling constraint}
        \STATE $(o_{t+1}, \text{cost}_t, \text{done}) \leftarrow \mathcal{E}.\textsc{Step}(a_t)$
        \STATE $s_{t+1} \leftarrow s_t \cup (a_t, o_{t+1}), \;\; B \leftarrow B - \text{cost}_t$
        
        \STATE $\hat s_{t+1} \leftarrow z_\theta(s_{t+1}), \;\; \hat v_{t+1}^\star \leftarrow \textsc{NextPlannedNode}(G,\pi^\star,t)$
        
        \IF{$\hat s_{t+1} \neq \hat s_{t+1}^\star$ \OR $B < 0$}
            \STATE \textbf{break} \COMMENT{$\triangleright$ Replan upon deviation or budget violation}
        \ENDIF
        \STATE $t \leftarrow t+1$
    \ENDWHILE
\ENDWHILE
\end{algorithmic}
\end{algorithm}

\subsection{Prompt Examples}
Prompts used in our framework, especially Sokoban task, are introduced in Prompt~\ref{box:prompt_ours_state_projection}, Prompt~\ref{box:prompt_ours_sample_plans}, Prompt~\ref{box:prompt_ours_build_plan_graph}, and Prompt~\ref{box:prompt_ours_annotate_graph}. Other prompts for are introduced in 

\begin{promptbox}[label={box:prompt_ours_state_projection}]{State Projector}
Extract the current exact player location, box locations, goal locations, and box on the goals' location, and prompt from the given history.
\end{promptbox}

\begin{promptbox}[label={box:prompt_ours_sample_plans}]{Sample Plans}

Interact with Sokoban environment to solve a task (placing every box onto goals.)

\{\{sokoban\_rule\}\}

\vspace{1em}

\#\# Instruction

\vspace{1em}

You are the player in a Sokoban environment, and your goal is to place every box on a goal within a limited number of actions (within step remaining). That means you need to make the same number of boxes on goals by placing all boxes onto goals. If a box is on a goal, it is considered satisfied.

\vspace{1em}

At each turn, you will receive the current observation.
Observation format (all coordinates are (x, y)):
- wall location: (x1, y1), (x2, y2), ...
- player location: (x, y)
- box location: (x3, y3), ...
- goal location: (x4, y4), ...
- box on goal location: (x5, y5), ...
- Step remaining: <steps\_remaining>
The "box location" list includes only boxes not on goals. The "goal location" list includes all goals.

\vspace{1em}

You are in planning mode. Using the Sokoban rules, plan the full solution (overall path) so that all boxes reach goals within the remaining steps.
Generate \{\{num\_plans\}\} diverse and valid plans.
For each plan, you must **precise** reason based on the location of player, box, goal, and the wall, and the Sokoban rules (task + mechanics). Then, generate the full solution (overall path) so that all boxes reach goals within the remaining steps, using the Sokoban rules (task + mechanics).
Based on this plan, you generate the full sequence of actions.
Generate as **diverse** as possible while maintaining success within given remaining steps.
Each plan MUST have a full action sequence (at most remaining steps of actions).
Provide exactly \{\{num\_plans\}\} plans labeled "Plan 1:" ,..., "Plan \{\{num\_plans\}\}:".

\vspace{1em}

IMPORTANT: Use the action-coordinate rules exactly. Actions update the player location: U (moving up) moves the player (x, y) -> (x, y+1), D (moving down) moves the player (x, y) -> (x, y-1), R (moving right) moves the player (x, y) -> (x+1, y), L (moving left) moves the player (x, y) -> (x-1, y). Always align your verbal directions (up/down/left/right) with these coordinate changes. When describing relative positions, use the coordinate conventions: "above" means larger y, "below" means smaller y, "right" means larger x, "left" means smaller x. Validate each step in the plan by explicitly computing the next (x, y) from the action; if the destination cell is a box, check the cell behind it and treat the move as a valid push if that cell is empty or a goal, otherwise the move is invalid and must not appear in the action sequence.
If the player pushes a box, the box will move one cell in the same direction as the player unless blocked by a wall or another box.
After any move (including a push), the player and any box must occupy different cells; they can never share the same location.
\end{promptbox}

\begin{promptbox}[label={box:prompt_ours_build_plan_graph}]{Build Plan Graph}
Simulate Sokoban plans, produce per-plan step sequences and generate the graph.

\vspace{1em}

\{\{sokoban\_rule\}\}

\vspace{1em}

IMPORTANT: Use the action-coordinate rules exactly. Actions update the player location: U (moving up) moves the player (x, y) -> (x, y+1), D (moving down) moves the player (x, y) -> (x, y-1), R (moving right) moves the player (x, y) -> (x+1, y), L (moving left) moves the player (x, y) -> (x-1, y). Always align your verbal directions (up/down/left/right) with these coordinate changes. When describing relative positions, use the coordinate conventions: "above" means larger y, "below" means smaller y, "right" means larger x, "left" means smaller x. Validate each step in the plan by explicitly computing the next (x, y) from the action; if the destination cell is a box, check the cell behind it and treat the move as a valid push if that cell is empty or a goal, otherwise the move is invalid and must not appear in the action sequence.
If the player pushes a box, the box will move one cell in the same direction as the player unless blocked by a wall or another box.
After any move (including a push), the player and any box must occupy different cells; they can never share the same location.

\vspace{1em}

\#\# Instructions

\vspace{1em}

- Simulate each plan step-by-step using the Sokoban rules and the observation.
- Build each plan's step sequence as alternating entries: node -> action -> node -> action -> ...
- A node entry must include a node id and the predicted observation text.
- Node ids must be unique across all plans (no duplicate node\_id between plans).
- IMPORTANT: Observations must include ONLY player location and box location (no walls, no goals, no box-on-goal).
- The observation text must start with a short "Thought: ..." line that explains how the previous action moves player/box while considering walls.
- An action entry must include only the action (U/D/L/R). No separate thought key in action entries.
- Preserve ALL actions from every plan (do not drop steps).

\vspace{1em}

Return JSON only:
\{
  "plan\_sequences": [
    \{
      "plan\_id": "plan1",
      "steps": [
        \{
          "kind": "node",
          "node\_id": "nodeX",
          "thought": "Initial states"
          "observation": "player location: (x, y)\\n box location: (x1, y1), ..."
        \},
        \{
          "kind": "action",
          "action": "U"
        \},
        \{
          "kind": "node",
          "node\_id": "nodeY",
          "thought": "After moving up (U), player is at (x, y+1) unless blocked; box moves to (x1, y1+1) if pushed."
          "observation": "player location: (x, y+1)\\n box location: (x1, y1+1), ..."
        \}
      ]
    \}
  ]
\},

\vspace{1em}

- After all plan sequences are built, merge identical nodes when the observation text matches exactly.
- Preserve ALL actions from every plan (do not drop steps).
- Construct the full graph (nodes + edges) from the merged nodes.
- Generate a thought for each edge based on the transition between observations.
- Mark goal nodes explicitly with "is\_goal": true in full\_graph nodes.
- JSON keys must appear in this order: reasoning, merge\_log, full\_graph.
- In merge\_log entries, put "reason" first, then "kept\_node", then "merged\_nodes".

\vspace{1em}

\{
  "reasoning": "overall reasoning for node merging and graph construction",
  "merge\_log": [
    \{
      "reason": "same observation text",
      "kept\_node": "nodeX",
      "merged\_nodes": ["nodeXX", ...]
    \}
  ],
  "full\_graph": \{
    "nodes": [
      \{
        "id": "nodeX",
        "observation": "player location: (x, y)\\nbox location: (x1, y1), ...",
        "is\_goal": true or false
      \}
    ],
    "edges": [
      \{
        "from": "nodeX",
        "to": "nodeY",
        "thought": "why this step is taken",
        "action": "<U/D/L/R>"
      \}
    ]
  \}
\}
\end{promptbox}

\begin{promptbox}[label={box:prompt_ours_annotate_graph}]{Annotate Graph}

You score all the states in the graph. Higher score means closer to solving.
Goal: move all boxes onto goals (use goal locations from the observation).
Assign score 1.0 to goal states. Assign score -1.0 if, starting from this state, there is no valid sequence of actions that can ever reach any goal state (deadlocked/unreachable). All other states should have score 0.0.
Consider wall locations, blocked pushes, and required pushing routes when judging reachability.
Only use scores -1.0, 0.0, or 1.0 (no other values).
For each state, provide a short reasoning sentence before assigning its score.
Return JSON only with both reasons and scores, for example:
\{
  "reasons": \{
    "s0": "reasoning for s0: whether any sequence can reach a goal, considering walls, blocked pushes, and deadlock signals (e.g., box stuck in a corner with no goal)",
    ...
  \},g
  "scores": \{
    "s0": 0.0,
    ...
  \}
\}.

\end{promptbox}

\newpage

\section{Additional Experiment}

\subsection{Planning \& Sampling Error Estimation}
\label{app:error_estimation}
This section describes how we extract the intended action from \texttt{Thought} and how we compute planning and sampling error rates reported in \autoref{tab:planning_sampling_errors}. Also, we do the qualitative analysis of each errors.

\paragraph{Setting.} For each step, we extract the intended next action $\hat a_t \in \{U,D,L,R\}$ from the agent's \texttt{Thought} using \texttt{gpt-4.1-mini} as a parser. 

Then, given the current state $s_t$ and the intended action $\hat a_t$, we simulate a one-step transition to obtain $s'_{t+1} = f(s_t,\hat a_t)$.
We run a Breadth-First Search (BFS) oracle from $s'_{t+1}$ to compute the minimum remaining steps to reach the goal, denoted $b(s'_{t+1})$.
Let $B(t+1)$ denote the remaining step budget at time $t+1$.
We mark $\hat a_t$ as non-viable if $d(s'_{t+1}) > B(t+1)$ or if $b(s'_{t+1}) = \infty$, and we count it as a planning error. 

Lastly, let $a_t$ denote the executed action from \texttt{Action}.
We define the sampling error indicator at step $t$ as $\mathbf{1}_{[a_t \neq \hat a_t]}$ and compute the sampling error rate by averaging this indicator over steps.

\paragraph{Sampling Error Example.}
In our analysis, we observe that sampling errors often manifest as a mismatch between the action implied by the LM’s plan and the action actually executed.
\autoref{tab:sampling_error_example} shows a representative example: the \emph{Thoughts} column contains the LM-generated planning trace, which clearly implies the next action $R$ (pushing the box right to the goal), while the executed action deviates to $U$.
Comparing the planned action inferred from the LM’s reasoning with the executed action reveals such stochastic deviations at execution time, which we refer to as \emph{sampling error}.

\begin{table}[t]
\centering
\small
\begin{tabular}{p{0.68\linewidth} c c}
\toprule
Thoughts & Planned & Executed \\
\midrule
The player is at (6, 4), right below the box at (6, 5). The player can push the box right from (6, 5) to (7, 5), which is the goal. This push will complete the task. The immediate next action is to \textcolor{blue}{push the box right} by moving right from (6, 4) to (7, 4), pushing the box from (6, 5) to (7, 5). & \textcolor{blue}{R} & \textcolor{red}{U} \\
\bottomrule
\end{tabular}
\caption{\textbf{Example of sampling error.} Although the plan specifies the immediate next action as $R$, the executed action is $U$.}
\label{tab:sampling_error_example}
\end{table}

\subsection{Additional Empirical Results}\label{app:additional_experiment}

\begin{table}[t]
\centering
\small
\caption{
\textbf{Comparing \ours with best-of-$N$ integration on the ReAct and PA frameworks~\citep{BestOfN}.}
Results are on \textbf{Sokoban (easy)}, and we report \textbf{mean $\pm$ standard error}.
For a compute-matched comparison, the Best-of-$N$ variants sample the same number of plans per step ($M$) as \ours.
Even under this matched sampling budget, \ours consistently achieves higher success rates, suggesting that the gains are not solely due to increased sampling.
}
\begin{tabular}{l c}
\toprule
Method & Success Rate (\%) \\
\midrule
\ours (M=4) & $\mathbf{46.0 \pm 5.0}$ \\
Plan-and-Act & $20.0 \pm 4.0$ \\
Plan-and-Act (M=4) & $\underline{22.0 \pm 4.1}$ \\
ReAct & $4.0 \pm 2.0$ \\
ReAct (M=4) & $8.0 \pm 2.7$ \\
\bottomrule
\end{tabular}
\label{tab:best_of_n}
\end{table}

\paragraph{Comparison with Best-of-$N$ Methods.}
We compare \ours against compute-matched best-of-$N$ variants of ReAct and PA~\citep{BestOfN, AgentDistill}, as shown in \autoref{tab:best_of_n}.
At each step, these baselines sample the same number of plans ($M$) as \ours and select the best outcome, yielding a roughly comparable trajectory-sampling budget (and hence similar inference-token usage).
On Sokoban (easy) with $M{=}4$, \ours achieves $46.0\%$ success, outperforming PA ($20.0\%$) and PA-best-of-$4$ ($22.0\%$), as well as ReAct ($4.0\%$) and ReAct-best-of-$4$ ($8.0\%$).
These results indicate that \ours's gains are not solely attributable to increased sampling, but rather to improved plan selection and execution under feasibility constraints.

\paragraph{Success under Larger Step Budgets.}
In \autoref{fig:success_larger_budget}, as we increase the step budget (i.e., provide more slack beyond the oracle minimum steps), we observe that ReAct and Plan-and-Act largely plateau, exhibiting only marginal changes in success despite the additional budget. In contrast, \ours improves monotonically from 46\% to 75\% success as the normalized step budget $B/S_{\min}$ increases, indicating that it can effectively exploit extra steps for recovery rather than getting stuck in irrecoverable failures. This trend is consistent with our error decomposition: \ours reduces planning errors that transition the agent into dead-end (non-viable) states (as measured by our Sokoban dead-end oracle) and mitigates sampling-induced execution deviations via constrained decoding and mismatch-triggered replanning. Consequently, \ours lowers the probability of entering irrecoverable dead-ends early, allowing additional budget to translate into sustained gains in task success.

\begin{figure}[t]
    \centering
    \resizebox{1.0\linewidth}{!}{
        \begin{subfigure}[b]{0.49\linewidth}
            \centering
            \includegraphics[width=\linewidth]{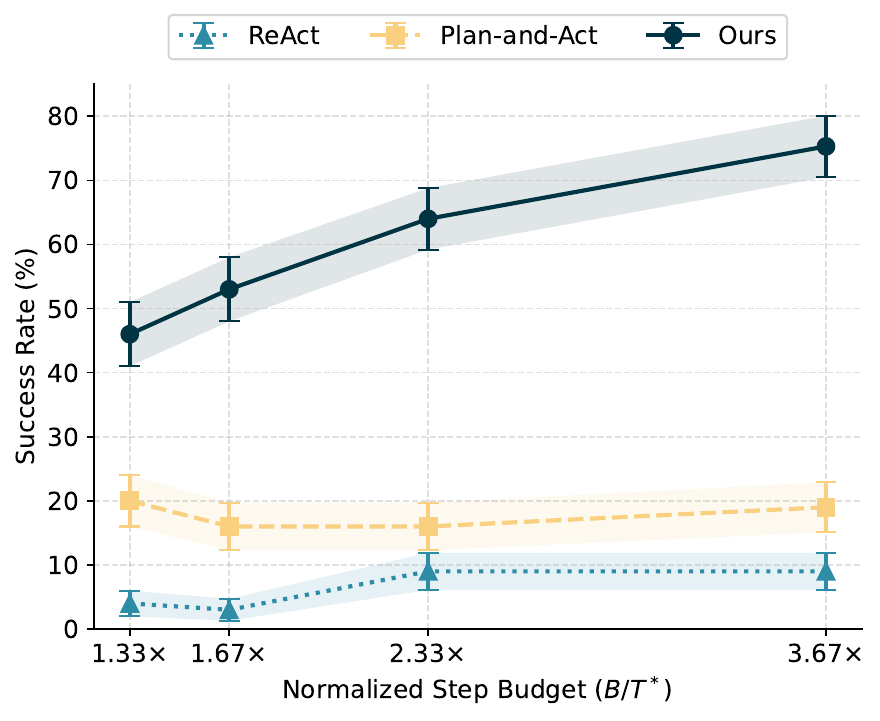}
            \caption{Success under Larger Step Budgets}
            \label{fig:success_larger_budget}
        \end{subfigure}
        \hfill %
        \begin{subfigure}[b]{0.49\linewidth}
            \centering
            \includegraphics[width=\linewidth]{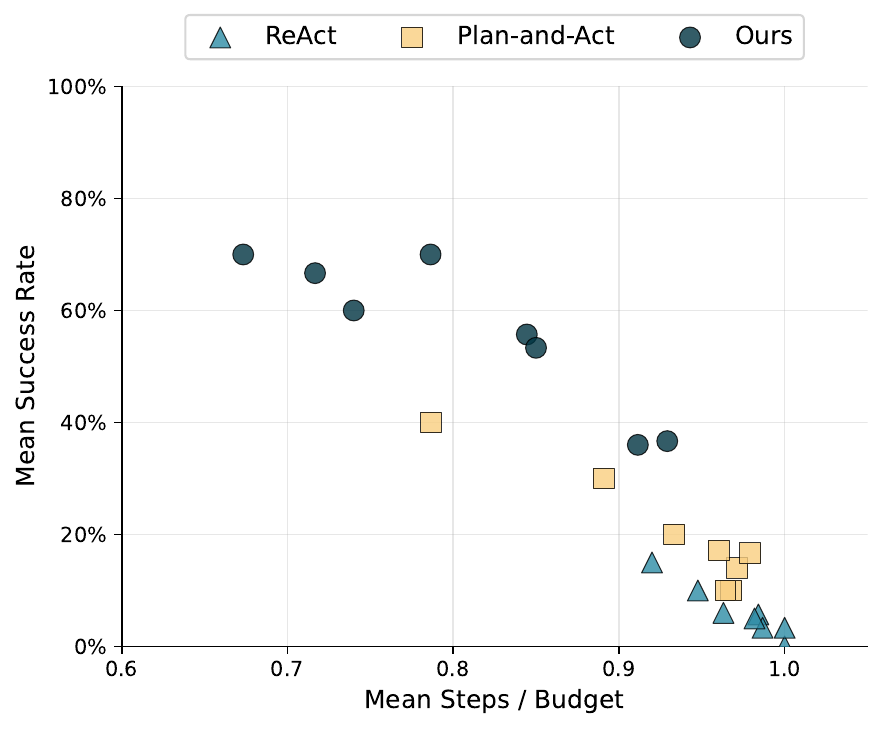}
            \caption{Success–Cost Trade-off}
            \label{fig:success_efficiency}
        \end{subfigure}
    }
\caption{
\textbf{Impact of larger step budgets and success-cost trade-off on Sokoban.}
\textbf{(a)} Success rate as a function of the normalized step budget $B/S_{\min}$, where $S_{\min}$ denotes the oracle minimum number of steps (here $S_{\min}=6$ and $B\in\{8,10,14,22\}$). Error bars indicate $\pm$ standard error across runs. Our framework is more higher success rate as step budget increases.
\textbf{(b)} Success rate versus step consumption (steps used divided by the budget). Methods closer to the top-left achieve higher success while consuming fewer steps per budget, indicating a better success--cost trade-off. \ours can be both efficient and powerful compared to other frameworks.
}
    \label{fig:appendix-more-analysis}
    \vspace{-10pt}
\end{figure}

\paragraph{Success–Cost Trade-off.} In \autoref{fig:success_efficiency}, the x-axis measures step consumption (steps used divided by the budget), so better methods lie toward the top-left (higher success with lower cost consumption). 
we find that \ours is closer to this region, achieving higher success while using fewer steps per budget on average compared to ReAct and PA. This result indicates that \ours does not only improve the success rate but also enhances the efficiency.

\subsection{Qualitative Analysis}

\paragraph{Graph Example.}
\begin{figure*}[t!]
    \centering
    \includegraphics[width=\linewidth]{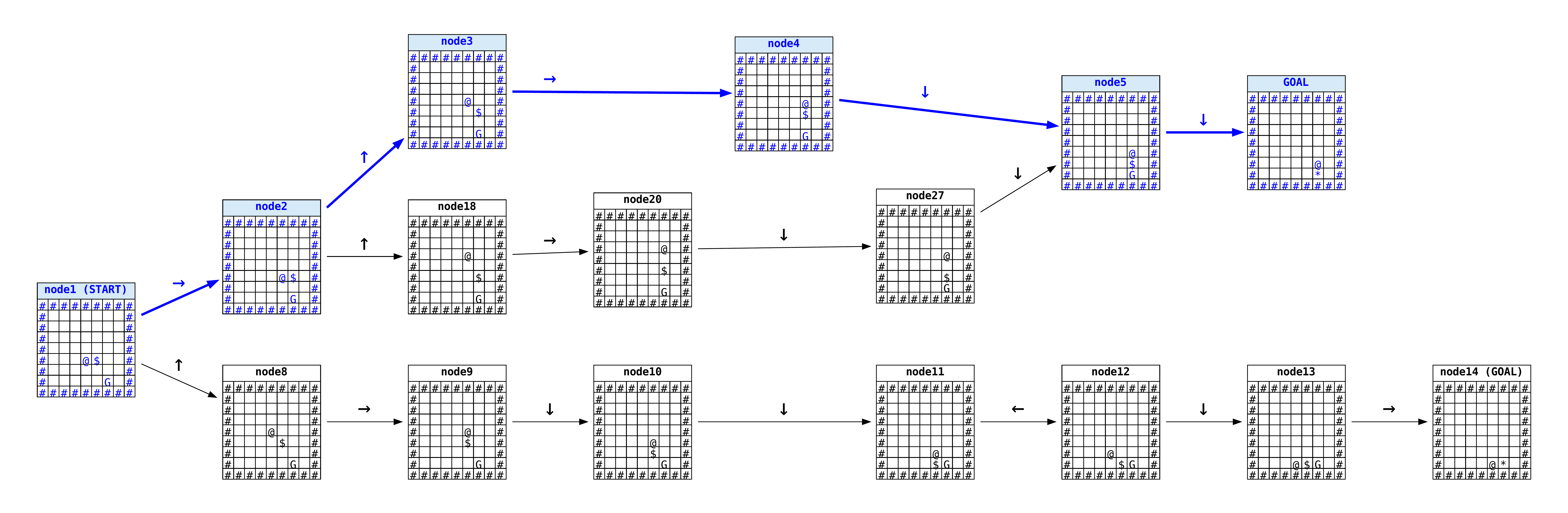}
\vspace{-15pt}
\caption{\textbf{Graph construction and plan selection example in Sokoban}. Graph is constructed by multiple paths from LLM lookahead plans. Then, a \textcolor{blue}{blue path} is selected by a formal solver (ILP). When performing constrained execution, \ours can accurately performs its plan, achieving the goal.}
    \label{fig:graph_example_sokoban}
\vspace{-15pt}
\end{figure*}

Figure \ref{fig:graph_example_sokoban} illustrates an example of the graph constructed by \ours for Sokoban. Nodes represent Sokoban states, where the goal is denoted as $G$, boxes as $\$$, the player as @, and walls as $\#$. Edges represent the actions. Given that 5 steps remain, the blue path shown in Figure \ref{fig:graph_example_sokoban} is the one selected by the external solver.
This example demonstrates that our framework is capable of finding a feasible path by using formal solver when the graph is well-constructed.

\begin{figure}[t]
    \centering
    \begin{subfigure}[b]{1.0\linewidth} %
        \centering
        \includegraphics[width=\linewidth]{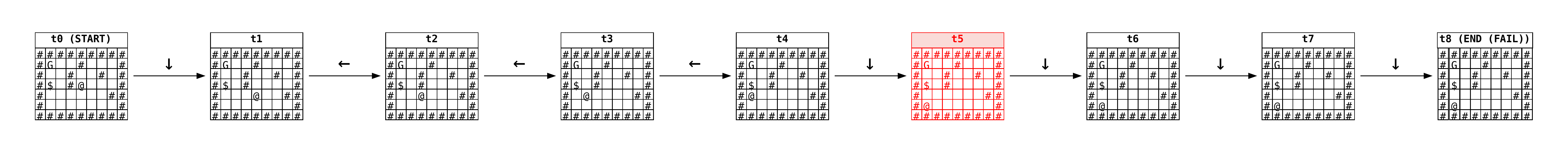}
        \caption{ReAct}
        \label{fig:react_example}
    \end{subfigure}
    
    \vspace{10pt} %
    
    \begin{subfigure}[b]{1.0\linewidth}
        \centering
        \includegraphics[width=\linewidth]{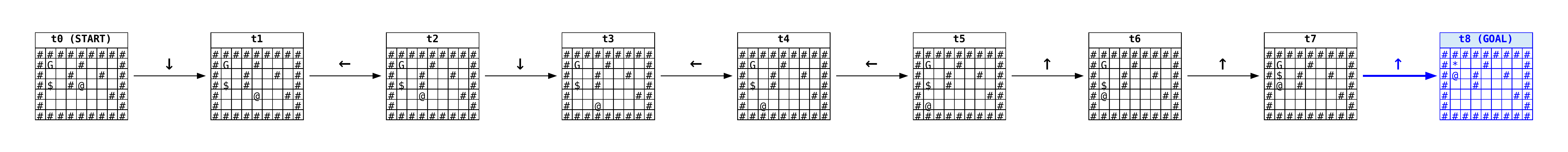}
        \caption{Our Framework}
        \label{fig:ours_example}
    \end{subfigure}

\caption{\textbf{Qualitative comparison between ReAct and TAPE.}
We visualize representative execution trajectories under the same action budget ($B=8$).
\textbf{(a) ReAct} makes a mistake at $t{=}5$ (red), after which it repeatedly deviates and fails to reach the goal within the budget, terminating at $t{=}8$ (END/FAIL).
\textbf{(b) TAPE} selects a feasible path and executes it reliably, reaching the goal within the same budget at $t{=}8$ (blue).}
    \label{fig:appendix-qualitative-comparison}
\end{figure}

\paragraph{Qualitative Comparison.}
\autoref{fig:appendix-qualitative-comparison} provides a representative trajectory-level comparison between ReAct and TAPE under the same action budget.
ReAct makes an early mistake (at $t{=}5$), after which the subsequent actions continue to deviate and the agent fails to reach the goal within the budget, terminating in failure.
In contrast, TAPE reaches the goal within the same budget by selecting a feasible plan and executing it more reliably, avoiding irrecoverable states.
Overall, this qualitative evidence supports our quantitative findings in \autoref{sec:experiment} that TAPE mitigates both planning and

\end{document}